\documentclass[11pt,reqno]{amsart}

\pdfoutput=1
\usepackage{amsfonts, amstext, amsmath, amsthm, amscd, amssymb, upgreek, mathtools, soul, float}
\usepackage[pdftex]{graphicx, color}
\usepackage[dvipsnames]{xcolor}
\usepackage[all,cmtip]{xy} 
\usepackage{tikz}
\usepackage{import}
\usepackage[hidelinks,pagebackref]{hyperref}
\usepackage{subfigure, wrapfig, overpic}
\usepackage{comment}
\usepackage[shortlabels]{enumitem}
\usepackage{etoolbox}
\usepackage{natbib}
\usepackage[right]{lineno}
\usepackage{array}
\usepackage[foot]{amsaddr}

\setlength{\textwidth}{6in}
\setlength{\textheight}{8.5in}
\setlength{\topmargin}{0in}
\setlength{\oddsidemargin}{.25in}
\setlength{\evensidemargin}{.25in}

\newcommand\restr[2]{{
  \left.\kern-\nulldelimiterspace
  #1
  \vphantom{\big|}
  \right|_{#2}
  }}

\makeatletter
\patchcmd{\@maketitle}
  {\ifx\@empty\@dedicatory}
  {\ifx\@empty\@date \else {\vskip3ex \centering\footnotesize\@date\par\vskip1ex}\fi
   \ifx\@empty\@dedicatory}
  {}{}
\patchcmd{\@maketitle}
  {\ifx\@empty\@date\else \@footnotetext{\@setdate}\fi}
  {}{}{}
\makeatother

\renewcommand*{\backref}[1]{}

\AtBeginDocument{
   \def\MR#1{}
}

\renewcommand{\leq}{\leqslant}
\renewcommand{\geq}{\geqslant}

\theoremstyle{plain}
\newtheorem{theorem}{Theorem}

\newtheorem{lemma}[theorem]{Lemma}

\newtheorem{proposition}[theorem]{Proposition}

\newtheorem*{namedtheorem}{\theoremname}
\newcommand{\theoremname}{testing}

\theoremstyle{definition}

\theoremstyle{remark}
\newtheorem{example}{Example}

\newcommand{\R}{\mathbb{R}}
\newcommand{\F}{\mathcal{F}}
\newcommand{\fix}{\mathrm{Fix}}
\newcommand{\id}{\mathrm{id}}
\newcommand{\crit}{\mathrm{Crit}}
\newcommand{\basin}{\mathrm{Basin}}
\newcommand{\qandq}{\quad \text{and} \quad}

\begin{document}

\title[Nowhere coexpanding functions]{Nowhere coexpanding functions}
\author[A.\ Cook]{Andrew Cook$^{\dagger,\ast,1}$}
\author[A.\ Hammerlindl]{Andy Hammerlindl$^{\ast,2}$}
\author[W.\ Tucker]{Warwick Tucker$^{\ast,3}$}

\address{$^{\dagger}$Corresponding author.}
\address{$^{\ast}$School of Mathematics, Monash University, Victoria 3800 Australia.}
\address{Email addresses:}
\address{$^1$\href{mailto:andrew.cook@monash.edu}{andrew.cook@monash.edu}, $^2$\href{mailto:andy.hammerlindl@monash.edu}{andy.hammerlindl@monash.edu}, $^3$\href{mailto:warwick.tucker@monash.edu}{warwick.tucker@monash.edu}.}

\date{14 September, 2023}

\begin{abstract}
We define a family of $C^1$ functions which we call ``nowhere coexpanding functions'' that is closed under composition and includes all $C^3$ functions with non-positive Schwarzian derivative. We establish results on the number and nature of the fixed points of these functions, including a generalisation of a classic result of Singer.
\end{abstract}

\maketitle

\section*{Lead paragraph}

\begin{bf}
The set of functions with negative Schwarzian derivative is closed under composition. As a result, it is a popular tool for studying the dynamics of functions in dimension one. A comprehensive treatment involving one-dimensional dynamics, with many results involving negative Schwarzian derivatives, is given in the book of \cite{1993melo}. More recent results appear in \cite{2000kozlovski,2009webb}, and \cite{2018mora}. In this paper, we study the dynamics of functions belonging to a new class $\F$. The class $\F$ includes all $C^3$ functions whose Schwarzian derivative is non-positive, but also includes many $C^1$ functions. Our main result states that for a function in $\F$ with no critical points, the set of fixed points is either an interval or has cardinality at most three. Further we adapt a classic result of \cite{MR494306} to apply to functions in $\F$, showing that for an attracting periodic point of a function in $\F$, either its immediate basin is unbounded, or its orbit attracts the orbit of a critical point. For some applications, functions in $\F$ can be glued together to make new functions in $\F$ which are in general not in $C^3$; we give examples of this in Section \ref{sec:gluing}.
\end{bf}

\section{Statement of main results}

To state our main results, we first introduce our function class $\F$ -- a subset of scalar $C^1$ functions. Consider a $C^1$ function $f : I \to \R$ defined on an interval $I \subset \R$. For now assume that $f$ has no critical points. If $f$ has two fixed points, $x$ and $y$ such that
\begin{equation} \label{equn:coexpanding}
f'(x)f'(y) > 1,
\end{equation}
then $x$ and $y$ are said to be {\bf coexpanding fixed points} of $f$.

We may generalise this definition to any two distinct points $x, y \in I$. Let $\Delta$ denote the diagonal of $I$, and define the function $\chi_f: I \times I \setminus \Delta \to \R$ as
\begin{equation*}
\chi_f(x, y) \coloneqq f'(x) f'(y) \left( \frac{x - y}{f(x) - f(y)} \right)^2.
\end{equation*}
\noindent
We say $x$ and $y$ (which are not necessarily fixed points) are {\bf coexpanding} if $\chi_f(x, y) > 1$. In other words, $x$ and $y$ are coexpanding if there is a scalar affine function $A:\R \to \R$ such that $x$ and $y$ are coexpanding fixed points of $A \circ f$. We say $f$ is {\bf nowhere coexpanding} if there are no such points in $I$.

Now suppose $f:\R \to \R$ and the set $\crit(f) \coloneqq \{x \in \R : f'(x) = 0\}$ of critical points is non-empty. We say $f$ is {\bf nowhere coexpanding} if its restriction to each connected component of $\R \setminus \crit(f)$ is nowhere coexpanding. Define $\F$ to be the set of all nowhere coexpanding functions. Observe that any affine function $A$ belongs to $\F$, since $\chi_A \equiv 1$ when $A$ is non-constant.

Section \ref{sec:coexpanding} shows that $\F$ is closed under composition, and classifies all possible sets of fixed points for functions in $\F$ with no critical points. Let $\fix(f) \coloneqq \{x \in \R: f(x) = x\}$. Our main result is as follows:

\begin{theorem} \label{thm:fix}
If $f \in \F$ has no critical points, then either $\fix(f)$ is a closed interval, or $f$ has at most three fixed points.
\end{theorem}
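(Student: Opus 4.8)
The plan is to reduce to the case where $f$ is strictly increasing and then to control the zeros of $g \coloneqq f - \id$ through the sign of $g' = f' - 1$. First, since $f$ has no critical points, $f'$ is continuous and nowhere zero on the connected domain $\R$, hence of constant sign, so $f$ is strictly monotone. If $f$ is decreasing then $g$ is strictly decreasing and has at most one zero, so $\fix(f)$ is a single point or empty and the conclusion holds trivially. Thus I may assume throughout that $f$ is strictly increasing with $f' > 0$.

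I would then record the elementary consequence of the definition at fixed points: if $x \neq y$ are both fixed, then $f(x) - f(y) = x - y$, so $\chi_f(x,y) = f'(x)f'(y)$. Being nowhere coexpanding therefore forces $f'(x)f'(y) \le 1$ for every pair of distinct fixed points; in particular $f$ has at most one fixed point at which $f' > 1$. It is also convenient to classify an isolated fixed point $p$ by the sign of $g$ on its two adjacent gaps: a pattern $(+,-)$ forces $f'(p) \le 1$, a pattern $(-,+)$ forces $f'(p) \ge 1$, and $(+,+)$ or $(-,-)$ forces $f'(p) = 1$, since there $g$ has a local extremum of value $0$.

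The engine of the proof is a minimum principle -- the analogue for $\F$ of the classical fact that a function of negative Schwarzian derivative has no positive local minimum of its derivative. I would prove in the preceding section that an increasing $f \in \F$ admits no interior local minimum of $f'$, so that each superlevel set $\{f' \ge c\}$ is an interval. Applied with $c = 1$, this shows $\{f' \ge 1\}$ is a (possibly empty or degenerate) closed interval $J$, so $g' = f' - 1$ is $\ge 0$ exactly on $J$ and $< 0$ on the two complementary rays. Hence $g$ is strictly decreasing, then non-decreasing on $J$, then strictly decreasing. Each strictly monotone piece contributes at most one zero, while the non-decreasing middle piece contributes either a single zero or an interval of zeros. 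Consequently $\fix(f)$ is the union of at most one point on the left, a point or an interval in the middle, and at most one point on the right. If the middle contributes only isolated zeros this already gives $|\fix(f)| \le 3$.

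The hard part will be the remaining case, in which the middle piece is a nondegenerate interval of fixed points: there I must rule out stray zeros on the two outer rays, since otherwise $\fix(f)$ would be disconnected with more than three points, contradicting the theorem. This is exactly where the full strength of $\chi_f \le 1$ at \emph{non-fixed} points is needed. The idea is that a nondegenerate flat interval of fixed points (where $f' \equiv 1$) forces $f'$ to exceed $1$ somewhere -- unless $f' \le 1$ everywhere, in which case $g$ is non-increasing and $\fix(f)$ is automatically a single interval -- and pairing a point $x$ with $f'(x) > 1$ against a distant fixed point $p$ in the flat interval makes $\chi_f(x,p)$ exceed $1$. Turning this heuristic into a proof, namely selecting the witness pair and estimating $\chi_f$, is the main obstacle; it relies on the affine invariance $\chi_{A \circ f} = \chi_f$ (so a candidate coexpanding pair may be normalised to fixed points of $A \circ f$) together with the minimum principle. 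Establishing that minimum principle from the bare inequality $\chi_f \le 1$ is the other technically delicate point, and I expect it to be the step requiring the most care.
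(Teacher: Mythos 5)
Your skeleton coincides with the paper's: reduce to increasing $f$, establish that $f'$ has no interior local minima, and split on whether $\fix(f)$ has interior. The minimum principle you defer is the paper's Lemma \ref{lem:loc_min}, and it is not the delicate step you fear: a dip $f'(a) = f'(b) > f'(x)$ on $(a,b)$ makes the secant slope strictly smaller than $f'(a)$ by the mean value theorem, so $\chi_f(a,b) > 1$ immediately. Your superlevel-set treatment of the discrete case is then a sound repackaging of the paper's Lemmas \ref{lem:contr_sandwich} and \ref{lem:discrete}.

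The genuine gap is exactly the case you flag as ``the main obstacle,'' and the mechanism you propose there cannot be completed as stated. Suppose $[a,b] \subset \fix(f)$ is nondegenerate and $f'(x_0) > 1$ for some $x_0 > b$; normalise a fixed point to $0$. Your witness pair gives $\chi_f(0, x_0) = f'(x_0)\left(x_0/f(x_0)\right)^2$, and no pointwise estimate forces this above $1$: nothing yet prevents $f(x_0)$ from being so much larger than $x_0$ that the pair constraint holds (e.g.\ $f'(x_0) = 2$ with $f(x_0) = 2x_0$ is consistent with $\chi_f(p, x_0) \leq 1$ for every fixed point $p$ in the flat interval). Indeed $\chi_f \leq 1$ for \emph{all} pairs is the hypothesis, so a coexpanding pair can never be exhibited by a local computation; the contradiction must come from integrating the whole one-parameter family of constraints. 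That is the content of the paper's Lemma \ref{lem:one_eared_cat}: the inequality $\chi_f(0,x) \leq 1$ yields the differential inequality $f'(x) \leq (f(x)/x)^2$ on $[b,\infty)$ with $f(b)=b$, and comparison with the unique solution $u(x) = x$ of the associated initial value problem (the paper cites Hartman) gives the majorisation $f(x) \leq x$, whence $f'(x) \leq (f(x)/x)^2 \leq 1$ everywhere. Only after this majorisation does your pair $(0, x_0)$ become coexpanding -- equivalently, $f' \leq 1$ makes $f - \id$ non-increasing, so $\fix(f)$ is connected and there are no stray zeros to rule out. The tools you name for closing the gap, affine invariance and the minimum principle, do not substitute for this comparison argument: a configuration in which $f'$ equals $1$ on the flat interval, rises above $1$, and then falls below $1$ contains no dip of $f'$ and so is not excluded by the minimum principle; it is killed only by the integrated inequality.
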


Section \ref{sec:schwarzian} provides a brief background on the Schwarzian derivative and shows that the class $\F$, when restricted to $C^3$, is the class of functions with non-positive Schwarzian derivative. This fact is made clear in the following result:

\begin{theorem} \label{thm:characterization}
Let $f:\R \to \R$ be a $C^3$ function. Then $f \in \F$ if and only if its Schwarzian derivative satisfies $S_f(x) \leq 0$ for all $x \in \R \setminus \crit(f)$.
\end{theorem}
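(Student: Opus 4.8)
The plan is to prove both implications through a single reformulation: on any open interval where $f' \neq 0$, the nowhere-coexpanding condition $\chi_f \leq 1$ is equivalent to the convexity of $u \coloneqq |f'|^{-1/2}$, and this convexity is in turn exactly the sign condition on $S_f$.

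First I would reduce to a single connected component $J$ of $\R \setminus \crit(f)$. On $J$ the derivative $f'$ is continuous and nonvanishing, hence of constant sign; replacing $f$ by $-f$ if necessary I may assume $f' > 0$ on $J$, since one checks directly that $\chi_{-f} = \chi_f$ and $S_{-f} = S_f$. As $f$ is $C^3$, the function $u = (f')^{-1/2}$ is $C^2$ and positive on $J$, and a routine logarithmic differentiation gives the identity
\[
  \frac{u''}{u} = -\tfrac{1}{2} S_f \quad \text{on } J.
\]
Because $u > 0$ and $u''$ is continuous, this shows $S_f \leq 0$ on $J$ if and only if $u'' \geq 0$, i.e.\ $u$ is convex on $J$.

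Next I would restate the coexpanding condition in terms of $u$. For $y < x$ in $J$, writing $f(x) - f(y) = \int_y^x f'(t)\,dt = \int_y^x u(t)^{-2}\,dt$ and $\sqrt{f'(x)f'(y)} = (u(x)u(y))^{-1}$, the inequality $\chi_f(x,y) \leq 1$ is equivalent to $f(x) - f(y) \geq (x-y)\sqrt{f'(x)f'(y)}$, that is, to
\[
  \int_y^x \frac{dt}{u(t)^2} \;\geq\; \frac{x - y}{u(x)\,u(y)}.
\]
Thus the whole theorem reduces to proving this integral inequality under convexity of $u$, together with its reversal under concavity.

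The key step — and the one I expect to carry the content — is the comparison with the secant line. Let $\ell$ be the affine function with $\ell(y) = u(y)$ and $\ell(x) = u(x)$. A direct computation shows that for any affine $\ell$ that is positive on $[y,x]$ one has the exact identity $\int_y^x \ell(t)^{-2}\,dt = (x-y)/(\ell(x)\ell(y))$, independent of the slope. If $u$ is convex then $u \leq \ell$ on $[y,x]$, so $u^{-2} \geq \ell^{-2}$, and integrating yields the displayed inequality; hence $S_f \leq 0$ implies $f \in \F$. For the converse I would argue contrapositively: if $S_f(x_0) > 0$ then, by continuity of $S_f$ on $J$, we have $u'' < 0$ on a neighbourhood where $u$ is strictly concave, so $u > \ell$ strictly in the interior and the same computation reverses the inequality, producing distinct points with $\chi_f > 1$ and hence $f \notin \F$. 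Assembling the componentwise equivalences over all components of $\R \setminus \crit(f)$ gives the theorem. The only real subtlety is the sign and normalisation bookkeeping; the exact secant-integral identity is precisely what makes the convexity comparison tight.
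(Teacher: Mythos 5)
Your proof is correct, and it takes a genuinely different route from the paper's. The paper handles the two implications by separate mechanisms: for $f \in \F \Rightarrow S_f \leq 0$ it uses the identity $(x-y)^2\, U_f(x,y) = \chi_f(x,y) - 1$ together with the standard limit $S_f(x) = 6 \lim_{y \to x} U_f(x,y)$ of Lemma \ref{lem:limit}; for the converse it argues dynamically, turning a coexpanding pair into a composition $g = A \circ f \circ B \circ A \circ f \circ B$ with two expanding fixed points, invoking Lemma \ref{lem:s_positive} (itself proved via a differential inequality on $f''/f'$) to get $S_g > 0$ somewhere, and then using the Schwarzian chain rule (Lemma \ref{lem:chain}) to transfer positivity back to $f$. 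You instead route everything through the classical identity $u''/u = -\tfrac{1}{2} S_f$ for $u = |f'|^{-1/2}$ and the exact secant identity $\int_y^x \ell(t)^{-2}\,dt = (x-y)/(\ell(x)\ell(y))$ for affine $\ell > 0$, so that both implications become a single convexity-versus-concavity comparison of $u$ with its chord. I verified the details: the identity $u''/u = -\tfrac{1}{2}S_f$ is correct; the reduction to $f' > 0$ is legitimate since $\chi_{-f} = \chi_f$ and $S_{-f} = S_f$; the chord $\ell$ is positive on $[y,x]$ in both cases (it dominates $u > 0$ when $u$ is convex, and is affine with positive endpoint values when $u$ is concave); and in the contrapositive, continuity of $S_f$ gives $u'' < 0$ on an open interval, hence strict inequality of the integrals and a genuine coexpanding pair within one component, which suffices under the paper's componentwise definition of $\F$. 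What each approach buys: yours is more self-contained and elementary — it avoids the $U_f$ limit lemma, Lemma \ref{lem:s_positive}, and the Schwarzian chain rule entirely — and it yields the sharper local statement that on each component of $\R \setminus \crit(f)$ the nowhere-coexpanding property is exactly convexity of $(f')^{-1/2}$, which also makes transparent that equality $\chi_f \equiv 1$ corresponds to affine $u$, i.e.\ M\"obius $f$. The paper's route, by contrast, reuses machinery that serves its dynamical results (notably the Singer-style Theorem \ref{thm:critical}), so the apparatus is not wasted there even though it makes this particular proof less direct.
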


In fact, Singer's result for functions with negative Schwarzian derivative may be generalised to nowhere coexpanding functions. To state the result, we need the following definitions. Let $p$ be an attracting period $n$ point of a function $f \in C^1$. The {\bf basin} of $p$ is
\[\basin(p) \coloneqq \{x \in \R : \lim_{k \to \infty} f^{n k}(x) = p\}.\]
The point $p$ is {\bf topologically attracting} if $\basin(p)$ is a neighbourhood of $p$. The {\bf immediate basin} of $p$ is the connected component of the basin containing $p$.

\begin{theorem} \label{thm:critical}
If $f \in \F$ and $p$ is a topologically attracting periodic point of $f$, then either the immediate basin of $p$ is unbounded, or there is a critical point of $f$ whose orbit is attracted to the orbit of $p$.
\end{theorem}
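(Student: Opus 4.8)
The plan is to reduce to an attracting fixed point of an increasing map and then exploit the defining inequality between an interior point of the immediate basin and its endpoint. First I would pass from the period-$n$ point $p$ to the fixed point $p$ of $g \coloneqq f^n$; since $\F$ is closed under composition, $g \in \F$, and the basin and immediate basin of $p$ are unchanged. If $g$ reverses orientation on the relevant component I would replace $g$ by $g^2$, which is again in $\F$, is increasing, and shares the same basin and hence the same (bounded) immediate basin. I would also record the routine translation of critical points: any critical point of $f^n$ forces, by the chain rule, a genuine critical point of $f$ on its orbit, whose orbit is therefore attracted to that of $p$ precisely when the $f^n$-critical point lies in the immediate basin. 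Thus it suffices to produce a critical point of $g$ in the immediate basin of $p$.

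Arguing by contradiction, I would assume the immediate basin $B$ is bounded and that $g$ has no critical point in $B$. Since $\basin(p)$ is open (topological attraction), $B$ is an open interval $(a,b)$ with $a,b \notin \basin(p)$, and $p$ is its only fixed point. The preliminary steps are: $g((a,b)) \subseteq (a,b)$, hence $g([a,b]) \subseteq [a,b]$; the endpoints cannot map into $B$, or they would lie in the basin, so $g(a),g(b) \in \{a,b\}$; and monotonicity forces $g(a)=a$ and $g(b)=b$. Examining the secant slopes $\tfrac{g(x)-g(a)}{x-a}$ as $x \to a^+$, where $g(x) > x$ because orbits in $(a,p)$ increase to $p$, gives $g'(a) \ge 1$, and symmetrically $g'(b) \ge 1$; in particular $a,b$ are not critical, so $[a,b]$ lies in a single component of $\R \setminus \crit(g)$ and the nowhere-coexpanding inequality $\sqrt{g'(x)g'(y)} \le \tfrac{g(y)-g(x)}{y-x}$ is available for every pair in $[a,b]$. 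Applying it to the pair $(a,b)$ gives $g'(a)g'(b) \le 1$, which together with the two lower bounds forces $g'(a)=g'(b)=1$.

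The decisive step is to apply the same inequality to a pair consisting of an interior point $x \in (a,p)$ and the endpoint $b$. Using $g'(b)=1$, $g(b)=b$, and $g(x) > x$, this yields $\sqrt{g'(x)} \le \tfrac{b-g(x)}{b-x} < 1$, so $g'(x) < 1$ for every $x \in (a,p)$. But the mean value theorem on $[a,p]$ produces $\xi \in (a,p)$ with $g'(\xi) = \tfrac{g(p)-g(a)}{p-a} = 1$, a contradiction. Hence $g$ must have a critical point in $B$, which descends to a critical point of $f$ whose orbit is attracted to the orbit of $p$.

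I expect the main obstacle to be the correct packaging of the setup rather than the final computation: pinning down that the endpoints are fixed with derivative exactly one, which is where the dynamics and the coexpanding inequality must be combined, and checking that everything goes through for merely topologically—rather than hyperbolically—attracting $p$, so that no hypothesis on $g'(p)$ is used. The point worth emphasising is that no full ``minimum principle'' for $g'$ is required: a single application of the defining inequality across the basin, set against a mean-value estimate, already forces the contradiction.
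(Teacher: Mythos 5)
Your proof is correct, and it follows the paper's overall skeleton — pass to an orientation-preserving iterate of $f$ fixing $p$ (the paper takes $n$ to be double the period at the outset, which tidies the small ordering wrinkle in your version: ``the relevant component'' and the sign of $g'$ on it are only well defined once you are already inside the contradiction hypothesis that $B$ contains no critical point), observe that a bounded immediate basin free of critical points has both endpoints fixed by $g$, and descend from a critical point of $g$ to one of $f$ — but it finishes by a genuinely different mechanism. The paper's proof is two lines at this stage: the fixed point $p$ sandwiched between the fixed endpoints contradicts Lemma \ref{lem:contr_sandwich}, which asserts $g'(p) > 1$ while attraction and the doubling give $0 < g'(p) \leq 1$; that lemma in turn rests on the no-local-minimum principle for the derivative (Lemma \ref{lem:loc_min}). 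You instead work directly with the defining inequality $\chi_g \leq 1$: secant slopes force $g'(a), g'(b) \geq 1$, the pair $(a,b)$ forces $g'(a)g'(b) \leq 1$, hence both endpoint derivatives equal $1$, and pairing $x \in (a,p)$ with $b$ yields $g'(x) < 1$ on $(a,p)$, against the mean value theorem on $[a,p]$. This buys self-containedness — you bypass Lemmas \ref{lem:loc_min} and \ref{lem:contr_sandwich} entirely — and, as you observe, attraction enters only through openness and invariance of the basin, never through $g'(p) \leq 1$; the paper's route, conversely, reuses Lemma \ref{lem:contr_sandwich}, which it needs anyway for Theorem \ref{thm:fix}, so nothing is wasted there. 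You also supply details the paper leaves implicit (openness of the basin, endpoints outside the basin, $g(a) = a$ and $g(b) = b$, orbits monotone toward $p$). One harmless phrasing slip: a critical point of $f^n$ anywhere in the basin — not only the immediate basin — has its $f$-orbit attracted to the orbit of $p$; your ``precisely when'' overstates, but you only use the correct direction.
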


Section \ref{sec:coexpanding} proves this Theorem.

\section{Fixed points of nowhere coexpanding functions} \label{sec:coexpanding}

In this section, we build a better understanding of the properties of nowhere coexpanding functions. We first show that $\F$ is closed under composition. Then we prove a list of necessary conditions for the nowhere coexpanding property and use these conditions to prove Theorems \ref{thm:fix} and \ref{thm:critical}.

\begin{proposition} \label{prop:compositions}
The set $\F$ of nowhere coexpanding functions is closed under composition.
\end{proposition}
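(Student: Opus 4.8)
The plan is to establish a multiplicative ``chain rule'' for the quantity $\chi$ and then combine it with a careful analysis of the critical set of a composition. Suppose first that $f$ has no critical points on an interval $I$, so that $f$ is strictly monotone and hence injective there, and that $g$ has no critical points on an interval containing $f(I)$. Writing $h = g \circ f$ and using $h'(x) = g'(f(x)) f'(x)$, I would substitute into the definition of $\chi_h$ and insert the factor $(f(x)-f(y))^2/(f(x)-f(y))^2$. Regrouping the terms yields the identity
\[
\chi_{g \circ f}(x,y) = \chi_g\bigl(f(x), f(y)\bigr) \cdot \chi_f(x, y),
\]
valid for all distinct $x, y \in I$, where $f(x) \neq f(y)$ is guaranteed by injectivity of $f$. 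From this identity the no-critical-point case is immediate: if $\chi_f \leq 1$ and $\chi_g \leq 1$ on the relevant domains, then their product is $\leq 1$, so $h$ is nowhere coexpanding.

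The remaining work, and the main obstacle, is to reduce the general case (where $f$ and $g$ may each have critical points) to this identity through a correct decomposition of $\R \setminus \crit(g \circ f)$. First I would note that $h'(x) = g'(f(x)) f'(x) = 0$ precisely when $x \in \crit(f)$ or $f(x) \in \crit(g)$. Fix a connected component $J$ of $\R \setminus \crit(h)$. Since $J$ avoids $\crit(f)$ and is connected, it lies in a single connected component of $\R \setminus \crit(f)$, on which $f$ is nowhere coexpanding; moreover $f|_J$ has no critical points and is therefore strictly monotone and injective, so $f(J)$ is an interval. Because every $x \in J$ satisfies $f(x) \notin \crit(g)$, the interval $f(J)$ is contained in $\R \setminus \crit(g)$, and being connected it lies in a single connected component $K$ of $\R \setminus \crit(g)$, on which $g$ is nowhere coexpanding.

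With this decomposition in hand, for any distinct $x, y \in J$ I would apply the identity: $\chi_f(x,y) \leq 1$ since $x$ and $y$ lie in one component of $\R \setminus \crit(f)$, and $\chi_g(f(x), f(y)) \leq 1$ since $f(x)$ and $f(y)$ are distinct points of $K$. Hence $\chi_h(x,y) \leq 1$, and as $J$ was an arbitrary component, $h = g \circ f$ is nowhere coexpanding. The delicate point throughout is the bookkeeping of the critical sets: verifying that the image under $f$ of a component of $\R \setminus \crit(h)$ genuinely lands inside a single component of $\R \setminus \crit(g)$, which relies on both the connectedness of that image and the fact that $f$ maps it into the complement of $\crit(g)$.
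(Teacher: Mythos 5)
Your proof is correct, and it reaches the result by a genuinely more direct route than the paper. You prove the multiplicative cocycle identity $\chi_{g \circ f}(x,y) = \chi_g\bigl(f(x), f(y)\bigr)\,\chi_f(x,y)$ outright; the paper never writes this identity down, instead normalising by affine maps (Lemmas \ref{lem:affine_composition} and \ref{lem:interval}): it chooses $A$ so that $x_1, x_2$ are fixed points of $A \circ h$, chooses $B$ so that $\hat f \coloneqq B^{-1} \circ f$ and $\hat g \coloneqq A \circ g \circ B$ both fix $x_1$ and $x_2$, and then multiplies the resulting derivative bounds $\hat f'(x_1)\hat f'(x_2) \leq 1$ and $\hat g'(x_1)\hat g'(x_2) \leq 1$. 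The two mechanisms are equivalent --- the affine conjugation is exactly a device for reducing $\chi \leq 1$ to the coexpanding-fixed-point inequality, and your identity subsumes the paper's computation $\chi_{g \circ B}(x,y) = \chi_g(B(x), B(y))$ because $\chi_A \equiv 1$ for non-constant affine $A$ --- but your version makes the multiplicative structure of $\chi$ explicit, which is more transparent and reusable, while the paper's version lets it quote the defining property of $\F$ verbatim at the fixed points. Your component bookkeeping is also more careful than the paper's one-line reduction (``apply Lemma \ref{lem:interval} to each connected component of $\R \setminus \crit(g \circ f)$''): you verify explicitly that a component $J$ of $\R \setminus \crit(h)$ lies in a single component of $\R \setminus \crit(f)$, and that $f(J)$, being a connected subset of $\R \setminus \crit(g)$, lies in a single component of $\R \setminus \crit(g)$ --- facts the paper leaves implicit. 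One small point you should make explicit: concluding that a product of two quantities each $\leq 1$ is itself $\leq 1$ requires both factors to be positive; this is automatic in your setting, since $f'$ (resp.\ $g'$) is continuous and nonvanishing on a connected component and hence of constant sign, so $f'(x)f'(y) > 0$ and therefore $\chi_f > 0$ (and likewise $\chi_g > 0$) on the relevant sets --- but it deserves a sentence.
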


To show this, we need the following lemmas.

\begin{lemma} \label{lem:affine_composition}
For any non-constant affine functions $A$ and $B$, a function $f$ is nowhere coexpanding if and only if $A \circ f \circ B$ is nowhere coexpanding.
\end{lemma}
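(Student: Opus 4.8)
The plan is to reduce everything to a single change-of-variables identity for $\chi$. Write $A(u) = au + b$ and $B(v) = cv + d$ with $a, c \neq 0$, and set $g \coloneqq A \circ f \circ B$. By the chain rule $g'(v) = ac\, f'(cv + d)$, and for distinct $x, y$ one has $g(x) - g(y) = a\bigl(f(B(x)) - f(B(y))\bigr)$ while $x - y = \tfrac{1}{c}\bigl(B(x) - B(y)\bigr)$. Substituting these into the definition of $\chi_g$, the factors of $a$ and $c$ cancel exactly, leaving
\begin{equation*}
\chi_{A \circ f \circ B}(x, y) = \chi_f\bigl(B(x), B(y)\bigr)
\end{equation*}
for all distinct $x, y$ in the domain. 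This identity is the heart of the lemma; the rest is bookkeeping.

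Next I would record how $B$ interacts with critical points. Since $g'(v) = ac\, f'(cv+d)$ with $ac \neq 0$, we have $\crit(g) = B^{-1}(\crit(f))$, so $B$ restricts to a bijection between the connected components of $\R \setminus \crit(g)$ and those of $\R \setminus \crit(f)$ (an affine map sends intervals to intervals, merely reversing orientation when $c < 0$). Thus it suffices to verify the equivalence componentwise, on pieces where neither $f$ nor $g$ has a critical point, which is precisely the setting in which the primitive ``no coexpanding points'' condition applies.

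On such a component, $B$ is a bijection onto its image that preserves distinctness, so as $(x,y)$ ranges over all pairs of distinct points, $\bigl(B(x), B(y)\bigr)$ ranges over all pairs of distinct points in the image. Combined with the identity above, this shows that $\chi_g(x,y) > 1$ for some distinct $x, y$ if and only if $\chi_f(u, v) > 1$ for some distinct $u, v$ in the corresponding component of $f$. Negating, $g$ is nowhere coexpanding on the component exactly when $f$ is nowhere coexpanding on its image, and the componentwise conclusions assemble into the full biconditional statement.

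I expect the only genuine obstacle to be the careful treatment of critical points and domains rather than the computation itself: one must confirm that the componentwise definition of ``nowhere coexpanding'' is matched correctly under $B$, including the orientation-reversing case $c < 0$ and the possibility that $f$ is defined on a proper subinterval rather than all of $\R$. The algebraic identity, by contrast, is a direct cancellation once the chain rule is applied, and because it is an exact equality it delivers both implications of the ``if and only if'' simultaneously.
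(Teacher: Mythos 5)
Your proof is correct and takes essentially the same route as the paper: the paper first disposes of $A$ by observing that $f \in \F$ iff $A \circ f \in \F$ (the factor from $A$ cancels in $\chi$, which is built into the definition via coexpanding fixed points of $A \circ f$), and then establishes exactly your change-of-variables identity $\chi_{g \circ B}(x, y) = \chi_g(B(x), B(y))$; you simply carry out the combined computation $\chi_{A \circ f \circ B}(x, y) = \chi_f(B(x), B(y))$ in one step. Your explicit bookkeeping that $\crit(A \circ f \circ B) = B^{-1}(\crit(f))$ and that $B$ puts the connected components in bijection is accurate and merely makes explicit what the paper's one-line proof leaves implicit.
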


\begin{proof}
By the definition of a nowhere coexpanding function, $f \in \F$ if and only if $A \circ f \in \F$. Therefore, it suffices to show that a function $g$ is in $\F$ if and only if $h \coloneqq g \circ B$ is in $\F$. But these are equivalent, since a quick calculation shows that $\chi_{h}(x, y) = \chi_g(B(x), B(y))$ for all distinct $x$ and $y$.
\end{proof}

\begin{lemma} \label{lem:interval}
Consider $f, g \in \F$. If their composition $h \coloneqq g \circ f$ has no critical points on an interval $I$, then $\chi_h(x_1, x_2) \leq 1$ for any distinct $x_1, x_2 \in I$.
\end{lemma}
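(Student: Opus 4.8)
The plan is to prove Lemma \ref{lem:interval} by a direct computation of $\chi_h$ using the chain rule, expressing it in terms of $\chi_f$ and $\chi_g$ evaluated at appropriate points. The key observation is that $h = g \circ f$, so $h'(x) = g'(f(x)) f'(x)$, and therefore
\[
\chi_h(x_1, x_2) = h'(x_1) h'(x_2) \left( \frac{x_1 - x_2}{h(x_1) - h(x_2)} \right)^2.
\]
Writing $y_i \coloneqq f(x_i)$ and substituting the chain rule expression, I would factor the right-hand side so that the quotient $\left(\frac{x_1-x_2}{h(x_1)-h(x_2)}\right)^2$ splits into the product of $\left(\frac{x_1-x_2}{f(x_1)-f(x_2)}\right)^2$ (coming from $f$) and $\left(\frac{f(x_1)-f(x_2)}{g(f(x_1))-g(f(x_2))}\right)^2$ (coming from $g$). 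This yields the factorisation $\chi_h(x_1, x_2) = \chi_f(x_1, x_2) \cdot \chi_g(y_1, y_2)$ whenever $f(x_1) \neq f(x_2)$, i.e.\ whenever $y_1 \neq y_2$.

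Once this factorisation is established, the conclusion is almost immediate in the generic case. Since $f, g \in \F$, we have $\chi_f(x_1, x_2) \le 1$ on each component of the domain where $f$ has no critical points, and $\chi_g(y_1, y_2) \le 1$ wherever $g$ has no critical points. Since $\chi_f$ and $\chi_g$ are products of derivatives times squared difference quotients, both factors are nonnegative; combining $0 \le \chi_f(x_1,x_2) \le 1$ and $0 \le \chi_g(y_1,y_2) \le 1$ gives $\chi_h(x_1,x_2) = \chi_f(x_1,x_2)\,\chi_g(y_1,y_2) \le 1$. I should be a little careful to invoke Lemma \ref{lem:affine_composition} or the definition directly to justify that the nowhere-coexpanding hypothesis gives $\chi \le 1$ (not just the strict failure of $> 1$) on the relevant components; since nowhere coexpanding means no pair satisfies $\chi > 1$, we indeed get $\chi \le 1$ for every admissible distinct pair.

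The main obstacle is the degenerate case $f(x_1) = f(x_2)$ with $x_1 \neq x_2$, where the factorisation above is invalid because the difference quotient for $g$ has a zero denominator, and $\chi_g(y_1, y_2)$ is undefined (the points coincide). In this situation $h(x_1) = g(f(x_1)) = g(f(x_2)) = h(x_2)$, so $\chi_h(x_1, x_2)$ is also undefined as written; but the hypothesis is that $h$ has no critical points on $I$, which forces $h$ to be strictly monotone on $I$, hence injective on $I$, so $h(x_1) = h(x_2)$ cannot occur for distinct $x_1, x_2 \in I$. Thus the degenerate case is excluded by the monotonicity that the no-critical-points hypothesis guarantees. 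I would also need to confirm that $f$ maps $x_1, x_2$ into a single component of $\R \setminus \crit(g)$ so that the bound $\chi_g(y_1,y_2)\le 1$ genuinely applies; this follows because $h' = (g' \circ f)\,f'$ is nonvanishing on $I$, forcing both $f'$ and $g'\circ f$ to be nonvanishing there, so $f(I)$ avoids $\crit(g)$ and lies in a single component by the intermediate value theorem applied to the monotone $f$.
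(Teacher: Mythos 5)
Your proof is correct and takes essentially the same route as the paper: the paper's affine maps $A$ and $B$ (chosen so that $x_1, x_2$ are fixed points of $\hat{f} \coloneqq B^{-1} \circ f$ and $\hat{g} \coloneqq A \circ g \circ B$) are precisely a device for the factorisation you compute directly, since $\hat{f}'(x_1)\hat{f}'(x_2) = \chi_f(x_1, x_2)$ and $\hat{g}'(x_1)\hat{g}'(x_2) = \chi_g(y_1, y_2)$, so the paper's conclusion $\hat{h}'(x_1)\hat{h}'(x_2) \leq 1$ is exactly your identity $\chi_h(x_1,x_2) = \chi_f(x_1,x_2)\,\chi_g(y_1,y_2)$ with both factors at most one. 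Your supplementary checks --- that $h' \neq 0$ on $I$ forces $f$ injective, so $y_1 \neq y_2$ (implicitly needed in the paper for $B$ to be invertible), and that $f(I)$ lies in a single component of $\R \setminus \crit(g)$ --- are correct and fill in details the paper leaves implicit, with one small repair: the positivity of the factors does not follow merely from their being ``products of derivatives times squared difference quotients'' (a product of two derivatives can be negative), but it does follow because $f'$ and $g'$ are continuous and nonvanishing, hence of constant sign, on $I$ and on the component containing $f(I)$, which your own monotonicity observation already supplies.
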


\begin{proof}
Let $A$ be the affine function such that $x_1$ and $x_2$ are fixed points of $\hat{h} \coloneqq A \circ h$. Let $y_1 = f(x_1)$, $y_2 = f(x_2)$, and let $B$ be the affine function taking $x_1$ to $y_1$, and $x_2$ to $y_2$. Now define $\hat{g} \coloneqq A \circ g \circ B$ and $\hat{f} \coloneqq B^{-1} \circ f$, so $\hat{h} = \hat{g} \circ \hat{f}$. Observe that $x_1$ and $x_2$ are fixed points of $\hat{f}$ and $\hat{g}$. By Lemma \ref{lem:affine_composition}, $\hat{f}, \hat{g} \in \mathcal{F}$, so
\[\hat{f}'(x_1) \hat{f}'(x_2) \leq 1 \qandq \hat{g}'(x_1) \hat{g}'(x_2) \leq 1,\]
and therefore
\[\hat{h}'(x_1) \hat{h}'(x_2) \leq 1.\qedhere\]

\end{proof}

\begin{proof}[Proof of Proposition \ref{prop:compositions}]
Apply Lemma \ref{lem:interval} to each connected component of $\R \setminus \crit(g \circ f)$.
\end{proof}

We now establish properties of the derivative of a nowhere coexpanding function.

\begin{lemma} \label{lem:loc_min}
If $f \in \F$ is regular on an interval $I$, then $f'$ has no local minima in $I$.
\end{lemma}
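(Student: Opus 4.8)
The plan is to read the nowhere coexpanding condition as a comparison between the geometric mean of the slopes at two points and the mean value of $f'$ between them, and then to show that a local minimum of $f'$ makes this comparison fail. Since $f$ is regular on $I$, the derivative $f'$ is continuous and never vanishes, hence has constant sign; composing with the affine map $x \mapsto -x$ if necessary (permissible by Lemma \ref{lem:affine_composition}), I assume $f' > 0$, so that $f$ is increasing. For distinct $x_1 < x_2$ in $I$, the hypothesis $f \in \F$ gives $\chi_f(x_1,x_2) \le 1$, and since every quantity involved is positive this is equivalent to
\[ \sqrt{f'(x_1) f'(x_2)} \;\le\; \frac{f(x_2) - f(x_1)}{x_2 - x_1} \;=\; \frac{1}{x_2 - x_1} \int_{x_1}^{x_2} f'(t)\, dt. \]
In words, the geometric mean of $f'$ at the endpoints never exceeds the average of $f'$ over the interval between them.

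I would then argue by contradiction. Suppose $f'$ has a local minimum at some $c$ in the interior of $I$, say $f'(t) \ge m := f'(c)$ for all $t$ in a neighbourhood $(c - \delta, c + \delta) \subset I$, and (the only interesting case) $f'$ is not constant there, so that $M := \sup_{(c-\delta,c+\delta)} f' > m$. Fix a level $s \in (m, M)$ and let $(x_1, x_2)$ be the connected component containing $c$ of the open set $\{t \in (c-\delta,c+\delta) : f'(t) < s\}$. In the generic situation this component is compactly contained in the neighbourhood, so by continuity $f'(x_1) = f'(x_2) = s$, while $f' < s$ throughout the open interval $(x_1, x_2)$. The left-hand side of the displayed inequality is then exactly $s$, whereas the right-hand side is the average of a continuous function lying strictly below $s$ on $(x_1,x_2)$ and is therefore strictly less than $s$. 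This contradicts the inequality, so no such $c$ can exist.

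The step requiring the most care — and the one I expect to be the main obstacle — is the degenerate geometry of the minimum: the bracketing above presupposes that $f'$ climbs back up to the common level $s$ on both sides of $c$ within the neighbourhood. This can fail when the minimum is one-sided or has a flat bottom (for instance if $f' \equiv m$ on an interval abutting $c$), in which case an endpoint of the component sits on the boundary of $(c-\delta,c+\delta)$ rather than at height $s$. To cover these cases I would choose the points asymmetrically: take $x_1$ inside the low region, where $f'$ is close to $m$, and $x_2$ just beyond the first point at which $f'$ rises above $m$, with $f'(x_2) = m + \eta$. A short estimate then shows that the geometric mean $\sqrt{f'(x_1)(m+\eta)}$ exceeds $m$ by an amount of order $\tfrac{1}{2}\eta$, while the average of $f'$ over $[x_1,x_2]$ — diluted by the stretch on which $f'$ stays near $m$ — exceeds $m$ by a strictly smaller amount, so the displayed inequality is again violated. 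Organising this dichotomy, two-sided ascent versus a flat or one-sided minimum, so that every continuous local minimum is caught is the only delicate part; the underlying mechanism is simply that a dip in $f'$ pulls its average strictly below the geometric mean of the surrounding endpoint values.
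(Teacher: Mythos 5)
Your core mechanism is the same as the paper's: bracket the minimum by two points at a common derivative level $s$ with $f' < s$ strictly in between, and observe that $\chi_f \le 1$ forces the geometric mean of the endpoint slopes (which equals $s$) to be at most the mean slope $\frac{f(x_2)-f(x_1)}{x_2-x_1}$, which is strictly below $s$. The paper phrases this via the mean value theorem rather than the integral average, but it is the identical contradiction. Your extra care about degenerate minima is a genuine refinement rather than a detour: the paper simply asserts the existence of $a < p < b$ with $f'(a) = f'(b) > f'(x)$ on $(a,b)$, which is automatic only for a strict local minimum (and the lemma must be read that way, since affine maps lie in $\F$ and every point is a weak local minimum of their derivative). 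Your asymmetric ``dilution'' estimate for plateau and one-sided minima is sketched but sound.

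There is, however, one genuinely wrong step: the reduction ``WLOG $f' > 0$'' via Lemma \ref{lem:affine_composition}. Composing with $x \mapsto -x$ on either side negates the derivative, so a local \emph{minimum} of $f'$ becomes a local \emph{maximum} of the new derivative, and the statement you would then need --- no local maxima of the derivative --- is false in $\F$ (e.g.\ $\tanh$, whose derivative peaks at $0$). Worse, no repair is possible, because the decreasing case of the lemma is itself false as literally stated: since $\F$ is closed under post-composition with affine maps by definition, $-\tanh \in \F$, yet its derivative $-\mathrm{sech}^2$ has a strict global minimum at $0$. So the decreasing case must be excluded, not reduced away; the correct sign-free statement would be that $|f'|$ has no strict local minima. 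To be fair, the paper's own proof carries the same tacit restriction: its final step, passing from $f'(a) > \frac{f(a)-f(b)}{a-b}$ to $f'(a) f'(b) \left(\frac{a-b}{f(a)-f(b)}\right)^2 > 1$, requires these slopes to be positive (with negative slopes the division reverses the inequality), and in the lemma's sole application (Lemma \ref{lem:contr_sandwich}) the derivative is indeed positive on the relevant interval. Your increasing-case argument is correct and matches the paper; replace the affine WLOG with the explicit hypothesis $f' > 0$ and your proof stands.
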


\begin{proof}
Suppose $f'$ has a local minimum at $p \in I$. Then there exist points $a$ and $b$ in $I$ with $a < p < b$ and such that $f'(a) = f'(b) > f'(x)$ for all $x$ in $(a, b)$. By the mean value theorem,
\[f'(a) > \frac{f(a) - f(b)}{a - b},\]
and therefore
\[f'(a) f'(b) \left(\frac{a - b}{f(a) - f(b)}\right)^2 > 1.\]
Thus $\chi_f(a, b) > 1$, showing that $f \notin \F$.
\end{proof}

\begin{lemma} \label{lem:contr_sandwich}
For a function $f \in \F$, if $\fix(f)$ has no interior and $a < b < c$ are fixed points such that $(a, c)$ contains no critical points, then $f'(b) > 1$.
\end{lemma}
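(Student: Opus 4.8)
The plan is to exploit the two fixed points flanking $b$ together with Lemma \ref{lem:loc_min}. First I would pin down the sign of $f'$ on $(a,c)$: since $(a,c)$ contains no critical points, $f'$ is continuous and nonvanishing there, hence of constant sign. The difference quotient across the fixed points $a$ and $b$ equals $\frac{f(b)-f(a)}{b-a} = \frac{b-a}{b-a} = 1$, so the mean value theorem produces a point $\xi_1 \in (a,b)$ with $f'(\xi_1) = 1 > 0$; this fixes the sign, so $f' > 0$ throughout $(a,c)$ and $f$ is strictly increasing there. Applying the mean value theorem again to the pair $b, c$ yields $\xi_2 \in (b,c)$ with $f'(\xi_2) = 1$, and by construction $\xi_1 < b < \xi_2$ with $\xi_1, \xi_2 \in (a,c)$.

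Next I would bring in Lemma \ref{lem:loc_min}. Since $f$ is regular on $(a,c)$, that lemma forbids $f'$ from having a local minimum in $(a,c)$. Consider $f'$ on the compact interval $[\xi_1,\xi_2]$; it attains a minimum there. If the minimum were attained at an interior point with value strictly below $1 = f'(\xi_1) = f'(\xi_2)$, that point would be a local minimum of $f'$ in $(a,c)$, contradicting the lemma. Hence the minimum equals $1$ and $f' \geq 1$ on all of $[\xi_1,\xi_2]$; in particular $f'(b) \geq 1$.

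It remains to upgrade this to the strict inequality $f'(b) > 1$, which I expect to be the delicate step. Suppose $f'(b) = 1$. Then $b$ is an interior point of $[\xi_1,\xi_2]$ at which $f'$ attains its minimum value $1$, so $f'(x) \geq 1 = f'(b)$ for all $x$ near $b$. To convert this weak local minimum into the configuration used in Lemma \ref{lem:loc_min}, I would rule out the possibility that $f' \equiv 1$ on a one-sided neighbourhood of $b$: on such an interval $f$ would have slope $1$ and fix $b$, forcing $f(x) = x$ there and producing an interval of fixed points, which is impossible because $\fix(f)$ has no interior. Thus on each side of $b$ there are points arbitrarily close to $b$ where $f' > 1$, and choosing an intermediate level $t$ with $1 < t$ below one such value and taking the points $a', b'$ nearest to $b$ on either side where $f'$ returns to $t$ gives $a' < b < b'$ with $f'(a') = f'(b') = t > f'(x)$ for all $x \in (a',b')$ — exactly a local minimum of the type excluded by Lemma \ref{lem:loc_min}. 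This contradiction yields $f'(b) > 1$.

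The main obstacle is precisely this strictness argument: the inequality $f'(b) \geq 1$ follows quickly from the slope-$1$ secant data and the no-local-minimum property, but ruling out equality requires both the hypothesis that $\fix(f)$ has no interior (to exclude flat stretches of $f'$ at level $1$) and some care in matching the exact local-minimum configuration that Lemma \ref{lem:loc_min} consumes.
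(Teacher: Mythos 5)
Your proof is correct and takes essentially the same route as the paper: the paper assumes $f'(b) \leq 1$, uses that $f'$ averages to unity on the invariant intervals $[a,b]$ and $[b,c]$ together with the no-interior hypothesis (ruling out $f' \equiv 1$, i.e.\ intervals of fixed points) to produce $p \in (a,b)$ and $q \in (b,c)$ with $f'(p), f'(q) > 1$, and then invokes Lemma \ref{lem:loc_min} exactly as you do. Your mean value theorem points $\xi_1, \xi_2$ are just the pointwise version of that averaging step, and your two-stage organisation (first $f'(b) \geq 1$, then ruling out equality via the flat-stretch argument) is a slightly more detailed packaging of the same contradiction --- in fact your explicit level-$t$ extraction of the configuration $f'(a') = f'(b') = t > f'(x)$ on $(a',b')$ spells out precisely the ``local minimum'' configuration that the paper's terser appeal to Lemma \ref{lem:loc_min} leaves implicit.
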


\begin{proof}
Suppose $f'(b) \leq 1$. As $[a, b]$ and $[b, c]$ are invariant, $f'$ must average to unity on these intervals. Since $[a, b]$ and $[b, c]$ cannot be subsets of $\fix(f)$, there exist points $p \in (a, b)$ and $q \in (b, c)$ with $f'(p) > 1$ and $f'(q) > 1$, yielding a local minimum in $f'$ in $(p, q)$ which contradicts Lemma \ref{lem:loc_min}.
\end{proof}

We are now ready to prove Theorem \ref{thm:critical}.

\begin{proof}[Proof of Theorem \ref{thm:critical}]
Let $g \coloneqq f^n$, where $n$ is double the period of $p$. Thus, $g \in \F$ and $0 < g'(p) \leq 1$. Observe that replacing $f$ by $g$ does not change the immediate basin of $p$. If $x_c$ is a critical point for $g$, there is $0 \leq k < n$ such that $f^k(x_c)$ is a critical point for $f$. Further, if $x_c$ is attracted to the orbit of $p$ by $f$, then so is $f^k(x_c)$. Thus, it suffices to show that either the immediate basin of $p$ under $g$ is unbounded, or there is a critical point of $g$ whose orbit is attracted to $p$.

Let $I$ be the immediate basin of $p$. To prove by contradiction, assume there are no critical points in $I$ and that $I$ is bounded. Then both endpoints of $I$ are fixed points of $g$, contradicting Lemma \ref{lem:contr_sandwich}.
\end{proof}

For the remainder of this section, we narrow our focus to the set of nowhere coexpanding functions with no critical points, in order to prove Theorem \ref{thm:fix}.

\begin{lemma} \label{lem:discrete}
If $f \in \F$ has no critical points and $\fix(f)$ has no interior, then $f$ has at most three fixed points.
\end{lemma}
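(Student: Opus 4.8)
The plan is to argue by contradiction: I will assume $f$ has at least four fixed points and manufacture a coexpanding pair, which is forbidden since $f \in \F$. The crucial observation is that Lemma \ref{lem:contr_sandwich} applies here with no friction, because its hypotheses are already guaranteed — $\fix(f)$ has no interior by assumption, and since $f$ has no critical points at all, every open interval is vacuously free of critical points.

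Before invoking that lemma I would dispose of an easy case using monotonicity. Since $f'$ is continuous and never zero, $f$ is strictly monotone. If $f$ is decreasing, then $f - \id$ is strictly decreasing and so vanishes at most once, meaning $f$ has at most one fixed point and the claim holds trivially. Thus I may assume $f$ is strictly increasing for the remainder.

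Now suppose, toward a contradiction, that $f$ has four fixed points $a < b < c < d$. Applying Lemma \ref{lem:contr_sandwich} to the triple $a < b < c$ gives $f'(b) > 1$, and applying it to the triple $b < c < d$ gives $f'(c) > 1$. Hence the distinct fixed points $b$ and $c$ satisfy $f'(b) f'(c) > 1$, i.e.\ condition \eqref{equn:coexpanding}, so $b$ and $c$ are coexpanding fixed points of $f$. This contradicts $f$ being nowhere coexpanding, and therefore $f$ admits at most three fixed points.

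I do not anticipate a substantive obstacle: once Lemma \ref{lem:contr_sandwich} is in hand, the heart of the argument is the single observation that the two interior members of a four-point chain of fixed points are each expanding, and hence together coexpanding. The only care required is organisational — treating the decreasing case separately, confirming that the global absence of critical points supplies the local no-critical-point hypothesis of Lemma \ref{lem:contr_sandwich}, and noting that $b$ and $c$ are genuinely distinct so that the coexpanding condition \eqref{equn:coexpanding} is legitimately triggered.
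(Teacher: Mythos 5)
Your proof is correct and takes essentially the same route as the paper: both arguments rest entirely on Lemma \ref{lem:contr_sandwich}, which forces any fixed point sandwiched between two others to be expanding. The only difference is the finishing step --- the paper closes by noting that expanding fixed points cannot be adjacent, whereas you observe that for fixed points $b$ and $c$ one has $\chi_f(b,c) = f'(b)f'(c) > 1$, contradicting $f \in \F$ directly; this is an equally valid (and arguably more definition-driven) conclusion, and your separate treatment of the decreasing case is superfluous but harmless, since Lemma \ref{lem:contr_sandwich} requires no monotonicity.
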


\begin{proof}
If $a$ and $b$ are fixed points, then by Lemma \ref{lem:contr_sandwich}, for any fixed point $p \in (a, b)$, $f'(p) > 1$. Since expanding fixed points cannot be adjacent, there is at most one fixed point in $(a, b)$. As $a$ and $b$ were arbitrarily chosen, $f$ cannot have more than three fixed points.
\end{proof}

\begin{lemma} \label{lem:one_eared_cat}
If $f \in \F$ has no critical points and $\fix(f)$ has non-empty interior, then $f'(x) \leq 1$ for all $x \in \R$.
\end{lemma}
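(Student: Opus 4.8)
The plan is to argue by contradiction, extracting a coexpanding pair directly from the inequality $\chi_f \le 1$ by pairing a point well inside the fixed interval with a point just beyond its edge. First I would record the easy structural facts: since $f$ has no critical points and $\fix(f)$ has nonempty interior, $f'$ is continuous and nonvanishing and equals $1$ somewhere (on the fixed interval), so $f' > 0$ throughout and $f$ is strictly increasing. Fix a nondegenerate closed interval $[\alpha, \beta] \subset \fix(f)$; there $f = \id$ and $f' \equiv 1$.

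Suppose for contradiction that $f'(p) > 1$ for some $p$. Since $p \notin [\alpha, \beta]$, it lies to one side; conjugating by the reflection $x \mapsto -x$ (an affine change of coordinates, so permissible by Lemma \ref{lem:affine_composition}) I may assume $p > \beta$. Next I would invoke Lemma \ref{lem:loc_min}: because $f'$ has no interior local minimum, it is unimodal (nondecreasing up to a single maximum, then nonincreasing). As $f'(\beta) = 1 < f'(p)$ with $p > \beta$, the maximum lies to the right of $\beta$, so $f'$ is nondecreasing on $[\beta, p]$. Enlarging $[\alpha, \beta]$ to the right as far as $f' = 1$ persists, I may further assume $f'(y) > 1$ for every $y \in (\beta, \beta + \delta]$ while $f'$ stays nondecreasing there; note the left endpoint $\alpha$ is unchanged, so $D \coloneqq \beta - \alpha > 0$.

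The crux is then a single application of the defining inequality to the pair $(\alpha, y)$ with $y = \beta + u$ and $u > 0$ small. Since $f(\alpha) = \alpha$, $f'(\alpha) = 1$ and $f$ is increasing, $\chi_f(\alpha, y) \le 1$ rearranges to $f(y) \ge \alpha + \sqrt{f'(y)}\,(y - \alpha)$. Writing $f(y) = \beta + \int_\beta^{y} f'$ and using monotonicity of $f'$ near $\beta$ to bound $\int_\beta^{y} f' \le u\, f'(y)$, the right-hand estimate collapses to
\[
\bigl(f(y) - \alpha\bigr) - \sqrt{f'(y)}\,(y - \alpha) \le \bigl(\sqrt{f'(y)} - 1\bigr)\bigl(u\sqrt{f'(y)} - D\bigr).
\]
For small $u$ the first factor is positive (as $f'(y) > 1$) and the second is negative (as $u\sqrt{f'(y)} \to 0 < D$), so the right side is negative, contradicting the rearranged inequality. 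Hence no such $p$ exists and $f'(x) \le 1$ everywhere.

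The step I expect to be most delicate is positioning the second point: a purely local analysis at $\beta$ only yields the inequality to first order in $u$ (both sides agree at $u = 0$), so the contradiction must come from the fixed gap $D = \beta - \alpha$ between the two endpoints. Pairing with the far endpoint $\alpha$ rather than one near $\beta$ is exactly what supplies this leverage, and securing the monotonicity and strict excess of $f'$ immediately past $\beta$ — so that $\int_\beta^{y} f' \le u\, f'(y)$ with $f'(y) > 1$ — is the part that genuinely requires Lemma \ref{lem:loc_min}.
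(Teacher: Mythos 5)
Your proof is correct, and it takes a genuinely different route from the paper's. The paper also starts from $\chi_f(x_0,x)\leq 1$ with $x_0$ a fixed point inside the plateau, but it normalises so that $x_0=0$, obtains the differential inequality $f'(x)\leq (f(x)/x)^2$ on $[b,\infty)$, and then invokes a standard ODE comparison theorem (citing Hartman): since $u(x)=x$ is the unique solution of $u'=(u/x)^2$, $u(b)=b$, majorisation forces $f(x)\leq x$ on the whole ray, whence $f'\leq 1$. You instead argue locally at the edge of the maximal fixed plateau, with Lemma \ref{lem:loc_min} as the key input in place of the comparison theorem: no strict dips makes $f'$ quasiconcave, hence nondecreasing and strictly greater than $1$ just past the enlarged edge, and a single evaluation of $\chi_f(\alpha,y)$ against the far endpoint gives $0\leq (f(y)-\alpha)-\sqrt{f'(y)}\,(y-\alpha)\leq (s-1)(us-D)$ with $s=\sqrt{f'(y)}>1$ and $us\to 0<D$, a contradiction; I checked the algebra and it is right, and your closing remark correctly identifies why the gap $D$ to the far endpoint is indispensable (pairing two points near $\beta$ gives only a first-order-degenerate inequality). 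Your route buys a self-contained elementary argument with no external ODE citation; the paper's is shorter given the citation and yields the stronger global fact $f(x)\leq x$ for $x\geq b$ along the way. Three small repairs for a final write-up: your claim that $f'$ is nondecreasing on all of $[\beta,p]$ overreaches, since the maximum of $f'$ may lie inside $(\beta,p)$ --- but you only use monotonicity on a short interval past the enlarged edge, where it holds because the peak must lie strictly to its right (else $f'(p)\leq f'(\beta')=1$); ``no local minima'' in Lemma \ref{lem:loc_min} must be read as ``no interior dips with equal higher endpoint values'' (plateaus allowed, as affine maps show), which is exactly the reading your quasiconcavity step needs and what the paper's proof of that lemma actually establishes; and you should note explicitly that $f'\equiv 1$ on $[\beta,\beta']$ together with $f(\beta)=\beta$ gives $f=\id$ on the enlarged plateau, so $f(\alpha)=\alpha$ and $f(\beta')=\beta'$ are still available when you rearrange $\chi_f(\alpha,y)\leq 1$ and write $f(y)=\beta'+\int_{\beta'}^{y}f'$.
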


\begin{proof}
As $\fix(f)$ has interior, there is an interval $[a, b] \subset \fix(f)$. By Lemma \ref{lem:affine_composition}, we can conjugate by a translation and reduce to the case where $a < 0 < b$. The result is immediate for $x \in [a, b]$. For $x \in [b, \infty)$,
\begin{equation} \label{equn:di}
\chi_f(0, x) = f'(0) f'(x) \left( \frac{x}{f(x)} \right)^2 \leq 1 \quad\text{and so}\quad f'(x) \leq \left( \frac{f(x)}{x} \right)^2.
\end{equation}
Observe that the solution $u(x) = x$ to the corresponding initial value problem,
\[u'(x) = \left( \frac{u(x)}{x} \right)^2, \quad u(b) = b\]
is unique. It is a standard result (see Chapter 3 of \cite{2002hartman} for instance) that since $f$ satisfies the differential inequality (\ref{equn:di}), it is majorised by $u$. That is, $f(x) \leq x$ for all $x \in [b, \infty)$ and so $f'(x) \leq 1$ follows from (\ref{equn:di}). A similar argument may be used for the case $x \in (-\infty, a]$.
\end{proof}

It is now easy to prove Theorem \ref{thm:fix}.

\begin{proof}[Proof of Theorem \ref{thm:fix}]
If $\fix(f)$ has non-empty interior, then by Lemma \ref{lem:one_eared_cat}, $f'(x) \leq 1$ for all $x \in \R$, and so $\fix(f)$ is connected. Otherwise, Lemma \ref{lem:discrete} applies.
\end{proof}

\section{The Schwarzian derivative} \label{sec:schwarzian}

In this section, we use Schwarzian derivatives to characterise $C^3$ functions in $\F$. Recall the {\bf Schwarzian derivative} of a $C^3$ function $f$ is defined by
\[S_f(x) = \frac{f'''(x)}{f'(x)} - \frac 3 2 \left(\frac{f''(x)}{f'(x)}\right)^2.\]

In complex analysis, the Schwarzian derivative measures how much a function differs from a M{\"o}bius transformation. In the realm of functions on $\mathbb{R}$, the Schwarzian derivative vanishes for affine functions, and is therefore a measure of how much a function varies from being affine. The following are standard results, which we will use in the proof of Theorem \ref{thm:characterization}. For details on Lemma \ref{lem:limit}, see for instance \cite{MR190326}.

\begin{lemma} \label{lem:limit}
Given a $C^3$ function $f$, let
\begin{equation*}
U_f(x, y) \coloneqq \frac{\partial^2}{\partial x \partial y} \log\left|\frac{f(x) - f(y)}{x - y}\right|,
\end{equation*}
for all distinct $x, y \in I$, where $I$ is a connected component of $\R \setminus \crit(f)$.
Then
\begin{equation*}
S_f(x) = 6 \lim_{y \to x} U_f(x, y).
\end{equation*}
\end{lemma}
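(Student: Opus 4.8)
The plan is to evaluate the mixed partial derivative $U_f$ in closed form and then extract its diagonal limit via a Taylor expansion. First I would split the logarithm as $\log\left|\frac{f(x)-f(y)}{x-y}\right| = \log|f(x)-f(y)| - \log|x-y|$ and differentiate each summand separately. Differentiating in $y$ gives $\partial_y \log|f(x)-f(y)| = \frac{-f'(y)}{f(x)-f(y)}$ and $\partial_y \log|x-y| = \frac{-1}{x-y}$; then differentiating each in $x$ yields the closed form
\[
U_f(x,y) = \frac{f'(x)\,f'(y)}{(f(x)-f(y))^2} - \frac{1}{(x-y)^2}.
\]
Since $f$ has no critical points on $I$, both $f(x)-f(y)$ and $x-y$ are nonzero for distinct $x,y \in I$, so this expression is well defined, and the sign choices inside the absolute values are irrelevant because only squared differences appear.

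Next I would compute the limit $y \to x$ by setting $y = x+h$ and expanding as $h \to 0$. Each of the two terms in $U_f$ blows up like $h^{-2}$, so the content of the lemma is that these leading singularities cancel and leave a finite constant. I would expand $f(x+h) = f(x) + f'(x)h + \tfrac{1}{2}f''(x)h^2 + \tfrac{1}{6}f'''(x)h^3 + o(h^3)$ and $f'(x+h) = f'(x) + f''(x)h + \tfrac{1}{2}f'''(x)h^2 + o(h^2)$, using the $C^3$ hypothesis. Factoring $h$ out of $f(x)-f(y)$ lets me write the first term of $U_f$ as $\frac{1}{h^2}\cdot\frac{f'(x)f'(y)}{P^2}$, where $P = f'(x) + \tfrac{1}{2}f''(x)h + \tfrac{1}{6}f'''(x)h^2 + o(h^2)$, and $\frac{1}{(x-y)^2} = \frac{1}{h^2}$ exactly.

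The main work, and the only delicate step, is computing the ratio $\frac{f'(x)f'(y)}{P^2}$ to order $h^2$. Writing numerator and denominator as $f'(x)^2(1 + ah + bh^2)$ and $f'(x)^2(1+ah+ch^2)$ with $a = f''(x)/f'(x)$, the first-order coefficients $a$ agree, which is precisely what forces the $h^{-2}$ singularities to cancel. A short expansion gives $\frac{1+ah+bh^2}{1+ah+ch^2} = 1 + (b-c)h^2 + o(h^2)$, so that $U_f(x,x+h) = (b-c) + o(1)$. I would then verify by direct substitution that $b - c = \frac{f'''(x)}{6f'(x)} - \frac{1}{4}\left(\frac{f''(x)}{f'(x)}\right)^2 = \tfrac{1}{6}S_f(x)$, yielding $\lim_{y\to x} U_f(x,y) = \tfrac{1}{6}S_f(x)$ and hence the claim. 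The expected obstacle is purely the bookkeeping of the quadratic coefficients; the conceptual point is that the agreement of the first-order terms guarantees cancellation of the singular part, so no subtraction of infinities is actually needed once the limit is taken.
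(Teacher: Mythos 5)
Your proof is correct. Note first that the paper does not actually prove this lemma: it is stated as a standard result with the proof deferred to the literature (``see for instance \cite{MR190326}''), so your contribution is a self-contained verification rather than a parallel to an argument in the paper. Your closed form
\[
U_f(x,y) \;=\; \frac{f'(x)\,f'(y)}{\bigl(f(x)-f(y)\bigr)^2} \;-\; \frac{1}{(x-y)^2}
\]
is right (and the absolute values are indeed harmless, since $\partial_y \log|u| = u'/u$ away from zeros of $u$, and $f$ is strictly monotone on $I$ because $f'$ has constant sign on a connected component of $\R \setminus \crit(f)$, so both denominators are nonzero). Your coefficient bookkeeping checks out: with $y = x+h$ one gets $b = \tfrac{f'''}{2f'}$ from the numerator and $c = \tfrac{f'''}{3f'} + \tfrac14\bigl(\tfrac{f''}{f'}\bigr)^2$ from squaring $P$, so $b - c = \tfrac{f'''}{6f'} - \tfrac14\bigl(\tfrac{f''}{f'}\bigr)^2 = \tfrac16 S_f(x)$, and the agreement of the first-order coefficients is exactly what cancels the $h^{-2}$ singularities. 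The Peano-remainder expansions you use are justified by the $C^3$ hypothesis, which is the minimal regularity the statement assumes. As a bonus, your closed form for $U_f$ is precisely the identity $(x-y)^2\,U_f(x,y) = \chi_f(x,y) - 1$ that the paper invokes without derivation in the proof of Theorem \ref{thm:characterization}, so your computation also fills in that implicit step.
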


The next lemma states the chain rule for Schwarzian derivatives as well as some properties that follow from it. The proof is omitted.

\begin{lemma} \label{lem:chain}
For $C^3$ functions $f$ and $g$,
\begin{enumerate}
\item For all $x \in \R \setminus \crit(g \circ f), \quad S_{g \circ f}(x) = S_g(f(x)) (f'(x))^2 + S_f(x).$
\item If $S_f(x) \leq 0$ for all $x \in \R \setminus \crit(f)$ and $S_g(y) \leq 0$, for all $y \in \R \setminus \crit(g)$, then $S_{g \circ f}(x) \leq 0$ for all $x \in \R \setminus \crit(g \circ f)$.
\item Suppose $g = B \circ f \circ A$, for non-constant affine functions $A$ and $B$. Then $S_f(x) \leq 0$ for all $x \in \R \setminus \crit(f)$ if and only if $S_g(y) \leq 0$ for all $y \in \R \setminus \crit(g)$.
\end{enumerate}
Properties (2) and (3) still hold if all the inequalities are changed to be strict.
\end{lemma}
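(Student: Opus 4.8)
The substance of Lemma~\ref{lem:chain} is part~(1), the chain rule for $S$; parts~(2) and~(3) then fall out as short corollaries, as do their strict analogues. The plan is therefore to establish~(1) first and read off the rest.

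For part~(1), I would avoid differentiating $g \circ f$ three times by instead exploiting the representation $S_f(x) = 6\lim_{y \to x} U_f(x,y)$ supplied by Lemma~\ref{lem:limit}. Writing $h \coloneqq g \circ f$, the starting point is the multiplicative factorisation of difference quotients,
\[
\frac{h(x) - h(y)}{x - y} = \frac{g(f(x)) - g(f(y))}{f(x) - f(y)} \cdot \frac{f(x) - f(y)}{x - y}.
\]
Taking $\log|\cdot|$ turns this product into a sum, and applying the mixed partial $\partial_x \partial_y$ gives
\[
U_h(x,y) = U_g(f(x), f(y))\, f'(x) f'(y) + U_f(x,y),
\]
where the factor $f'(x) f'(y)$ appears by the chain rule because the first summand depends on $(x,y)$ only through $(f(x), f(y))$. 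Letting $y \to x$, using continuity of $f$ and $f'$ and Lemma~\ref{lem:limit} applied to $g$ at the point $f(x)$, and then multiplying by $6$, yields the claimed identity $S_{g \circ f}(x) = S_g(f(x))(f'(x))^2 + S_f(x)$.

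Part~(2) is then immediate: on $\R \setminus \crit(g \circ f)$ one has $f'(x) \neq 0$ and $f(x) \notin \crit(g)$, so $(f'(x))^2 > 0$ and both summands in the chain rule are non-positive under the hypotheses, forcing $S_{g\circ f}(x) \leq 0$. For part~(3) I would apply the chain rule twice and use $S_A \equiv S_B \equiv 0$ (affine maps have vanishing second and third derivatives) to obtain $S_g(x) = (A'(x))^2\, S_f(A(x))$; since $A'$ is a nonzero constant and $A$ is a bijection carrying $\R \setminus \crit(g)$ onto $\R \setminus \crit(f)$, the sign of $S_g$ at $x$ agrees with that of $S_f$ at $A(x)$, giving the equivalence. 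Replacing every ``$\leq$'' by ``$<$'' leaves each of these arguments intact, which is precisely the final sentence of the statement.

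The only genuine obstacle is justifying the limit interchange in part~(1): I must verify that $\lim_{y \to x} U_g(f(x), f(y)) = \tfrac{1}{6} S_g(f(x))$, which requires $f(x) \notin \crit(g)$ so that $U_g$ is defined near the diagonal point $(f(x), f(x))$, together with continuity of $f$ to reduce the composite two-variable limit to the one-variable limit furnished by Lemma~\ref{lem:limit}. A fully self-contained alternative, should one prefer not to invoke Lemma~\ref{lem:limit}, is to compute $h'$, $h''$, $h'''$ by the ordinary chain and product rules and simplify $h'''/h' - \tfrac{3}{2}(h''/h')^2$ directly; this needs no new ideas, but the algebra is the one laborious part, which is presumably why the authors chose to omit it.
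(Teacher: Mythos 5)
Your proof is correct, but there is nothing in the paper to compare it against: the paper states Lemma~\ref{lem:chain} with ``The proof is omitted,'' treating the Schwarzian chain rule as standard. Your derivation of part~(1) via the factorisation of difference quotients and the representation $S_f(x) = 6\lim_{y\to x} U_f(x,y)$ is a genuinely nice route, and it is in the spirit of the paper's own toolkit, since Lemma~\ref{lem:limit} is exactly the device the authors use in proving Theorem~\ref{thm:characterization}; it buys you the identity with essentially no computation, whereas the textbook proof grinds through $h'$, $h''$, $h'''$. Two small points you gloss over but should record: first, for $x \neq y$ in a component of $\R \setminus \crit(g\circ f)$ one has $f' \neq 0$ throughout, so $f$ is strictly monotone there and $f(x) \neq f(y)$, which is what makes the middle quotient $\bigl(g(f(x))-g(f(y))\bigr)/\bigl(f(x)-f(y)\bigr)$ legitimate and keeps $U_g(f(x),f(y))$ defined; second, since $g'(f(t)) \neq 0$ along the component, the image $f([x,y])$ lies in a single connected component of $\R \setminus \crit(g)$, which is the hypothesis under which the paper's Lemma~\ref{lem:limit} applies to $g$. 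You correctly flag the limit interchange as the one point needing care, and your resolution (existence of the one-variable limit $\lim_{v \to f(x)} U_g(f(x),v)$ plus $f(y) \to f(x)$ with $f(y) \neq f(x)$) is sound. Parts~(2) and~(3) and the strict variants are handled exactly as one should: note in~(3) that $A'$ is constant, so $(A'(x))^2$ is really a positive constant, and your observation that $A$ carries $\R\setminus\crit(g)$ bijectively onto $\R\setminus\crit(f)$ (from $g' = B' \cdot (f'\circ A)\cdot A'$) is the detail that makes the equivalence, not merely one implication, go through.
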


The following lemma is adapted from Chapter 9.4 of \cite{MR3012659}.

\begin{lemma} \label{lem:s_positive}
Let $f$ be a $C^3$ function with fixed points $a < b$ such that no critical point of $f$ lies in $(a, b)$. If $f'(a) > 1$ and $f'(b) > 1$, then there exists $p \in (a, b)$ such that $S_f(p) > 0$.
\end{lemma}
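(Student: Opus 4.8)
The plan is to argue by contradiction, converting the sign condition on $S_f$ into a convexity property of the auxiliary function $\phi \coloneqq (f')^{-1/2}$. First I would observe that since $f'(a), f'(b) > 1 > 0$ and $f$ has no critical point in $(a,b)$, continuity of $f'$ forces $f' > 0$ on all of $[a,b]$; hence $f$ is increasing there and $\phi$ is a well-defined, positive $C^2$ function on $[a,b]$. Differentiating $\phi = (f')^{-1/2}$ twice yields the classical identity
\[ \phi''(x) = -\tfrac{1}{2}\,\phi(x)\, S_f(x), \]
so that, since $\phi > 0$, the condition $S_f(x) \le 0$ is equivalent to $\phi''(x) \ge 0$, i.e.\ to $\phi$ being convex at $x$.

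Now I would suppose, for contradiction, that $S_f(x) \le 0$ for every $x \in (a,b)$. Then $\phi$ is convex on $[a,b]$, so its graph lies on or below the chord joining $(a, \phi(a))$ and $(b, \phi(b))$. Since $\phi(a) = f'(a)^{-1/2} < 1$ and $\phi(b) = f'(b)^{-1/2} < 1$, that entire chord lies strictly below $1$, and hence $\phi(x) < 1$ for all $x \in [a,b]$. Consequently $f'(x) = \phi(x)^{-2} > 1$ throughout, and integrating gives
\[ f(b) - f(a) = \int_a^b f'(x)\, dx > \int_a^b 1\, dx = b - a. \]
This contradicts the fact that $a$ and $b$ are fixed points, for which $f(b) - f(a) = b - a$. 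Therefore $S_f(p) > 0$ for some $p \in (a,b)$.

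I expect the only genuine subtlety to be the identity relating $\phi''$ and $S_f$: recognising that $(f')^{-1/2}$ is the right object to consider is the crux, since it converts the sign condition on the Schwarzian into ordinary convexity. Once that is in place, the remaining steps -- bounding a convex function by its chord and integrating $f'$ across a fixed-point interval -- are elementary. One could instead phrase the argument through the kernel $U_f$ of Lemma \ref{lem:limit}, but the convexity formulation seems cleanest and avoids passing to the limit $y \to x$.
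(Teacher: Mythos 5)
Your proof is correct, and it takes a genuinely different route from the paper's. The paper also argues by contradiction, but it works with the logarithmic derivative $g(x) = f''(x)/f'(x)$: since $a$ and $b$ are fixed, $f'(c) < 1$ for some $c \in (a,b)$, the mean value theorem applied to $\log f'$ on $[a,c]$ and $[c,b]$ produces points $r < s < t$ with $g(r) < 0 = g(s) < g(t)$, and the assumption $S_f \leq 0$ turns the identity $g' = S_f + \tfrac{1}{2} g^2$ into the differential inequality $g' \leq \tfrac{1}{2} g^2$ with $g(s) = 0$, whereupon an ODE comparison theorem (cited from Hartman, as in Lemma \ref{lem:one_eared_cat}) forces $g \leq 0$ on $[s,b]$, contradicting $g(t) > 0$. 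Your substitution $\phi = (f')^{-1/2}$ linearizes exactly this Riccati structure -- the identity $\phi'' = -\tfrac{1}{2}\phi S_f$ is equivalent to the paper's $g' = S_f + \tfrac{1}{2}g^2$ via $\phi'/\phi = -\tfrac{1}{2}g$ -- and thereby replaces the comparison machinery with plain convexity: the chord bound gives $\phi < 1$ on $[a,b]$, hence $f' > 1$ throughout, and integrating contradicts $f(b) - f(a) = b - a$ directly, without ever needing to locate a point where $f' < 1$. What each approach buys: the paper's argument stays in the same differential-inequality framework it already set up for Lemma \ref{lem:one_eared_cat}, so it reuses an established tool; yours is more elementary and self-contained, needing only the convexity characterization of $S_f \leq 0$, the chord inequality, and the fundamental theorem of calculus. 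Your preliminary observations are also sound: $f' > 0$ on $[a,b]$ follows from continuity, positivity at the endpoints, and the absence of critical points in $(a,b)$, so $\phi$ is a positive $C^2$ function there (using $f \in C^3$), and $\phi'' \geq 0$ on the open interval extends to convexity on the closed interval by continuity.
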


\begin{proof}
Assume $S_f(x) \leq 0$ for all $x \in (a, b)$. Since $a$ and $b$ are fixed points, $f$ cannot be expanding for all $x$ in $(a, b)$. Thus, there exists $c \in (a, b)$ such that $f'(c) < 1$. Define $g(x) \coloneqq \frac{d}{dx} \log(f'(x)) = \frac{f''(x)}{f'(x)}$. By the mean value theorem applied to $\log(f'(x))$ on $[a, c]$, there exists $r \in (a, c)$ such that $g(r) < 0$. Similarly, there exists $t \in (c, b)$ such that $g(t) > 0$, and hence there exists $s \in (r, t)$ such that $g(s) = 0$. A simple calculation shows $g'(x) = S_f(x) + \frac 1 2 (g(x))^2$, and by our assumption, $g$ satisfies the differential inequality
\[g'(x) \leq \frac 1 2 (g(x))^2, \quad g(s) = 0.\]
Similar to the proof of Lemma \ref{lem:one_eared_cat}, it follows that $g(x) \leq 0$ for $x \in [s, b]$. This contradicts $g(t) > 0$, so our initial assumption was wrong.
\end{proof}

We are now ready to prove Theorem \ref{thm:characterization}.

\begin{proof}[Proof of Theorem \ref{thm:characterization}]
Suppose $f \in \F$. Evaluating $U_f$ in Lemma \ref{lem:limit} yields
\[(x - y)^2 U_f(x, y) = \chi_f(x, y) - 1.\]
Since $\chi_f(x, y) \leq 1$, we have $(x - y)^2 U_f(x, y) \leq 0$, so $U_f(x, y) \leq 0$. Therefore by Lemma \ref{lem:limit}, $S_f(x) \leq 0$ for all $x \in \R \setminus \crit(f)$.

Now suppose $f \notin \F$, so that $A \circ f$ has coexpanding fixed points $a < b$ for some affine function $A$ and the interval $(a, b)$ contains no critical point of $f$. Let $B$ be the affine function interchanging $a$ and $b$. Then $g \coloneqq A \circ f \circ B \circ A \circ f \circ B$ satisfies $g'(a) > 1$ and $g'(b) > 1$. By Lemma \ref{lem:s_positive}, there exists $p \in (a, b)$ such that $S_g(p) > 0$. Therefore, by items (2) and (3) of Lemma \ref{lem:chain}, $S_f(q) > 0$ for some $q$.
\end{proof}

\section{Examples} \label{sec:gluing}

For functions $f \in \F$ with no critical points, Theorem \ref{thm:fix} allows for zero, one, two, three, or infinitely many fixed points, where in the last case, $\fix(f)$ is some interval. In this section, we provide examples of functions if $\F$ for each of these cases. Further, we introduce a way of {\em gluing} some functions in $\F$ together to produce new functions in $\F$.

\begin{example} \label{eg:finite}
The following functions in $\F$ have no critical points and finitely many fixed points. In each case the number of fixed points is robust under $C^1$ small perturbations.

\begin{figure}[H]
\begin{centering}
\begin{tabular}{c c c c}
\includegraphics[width=0.2\linewidth]{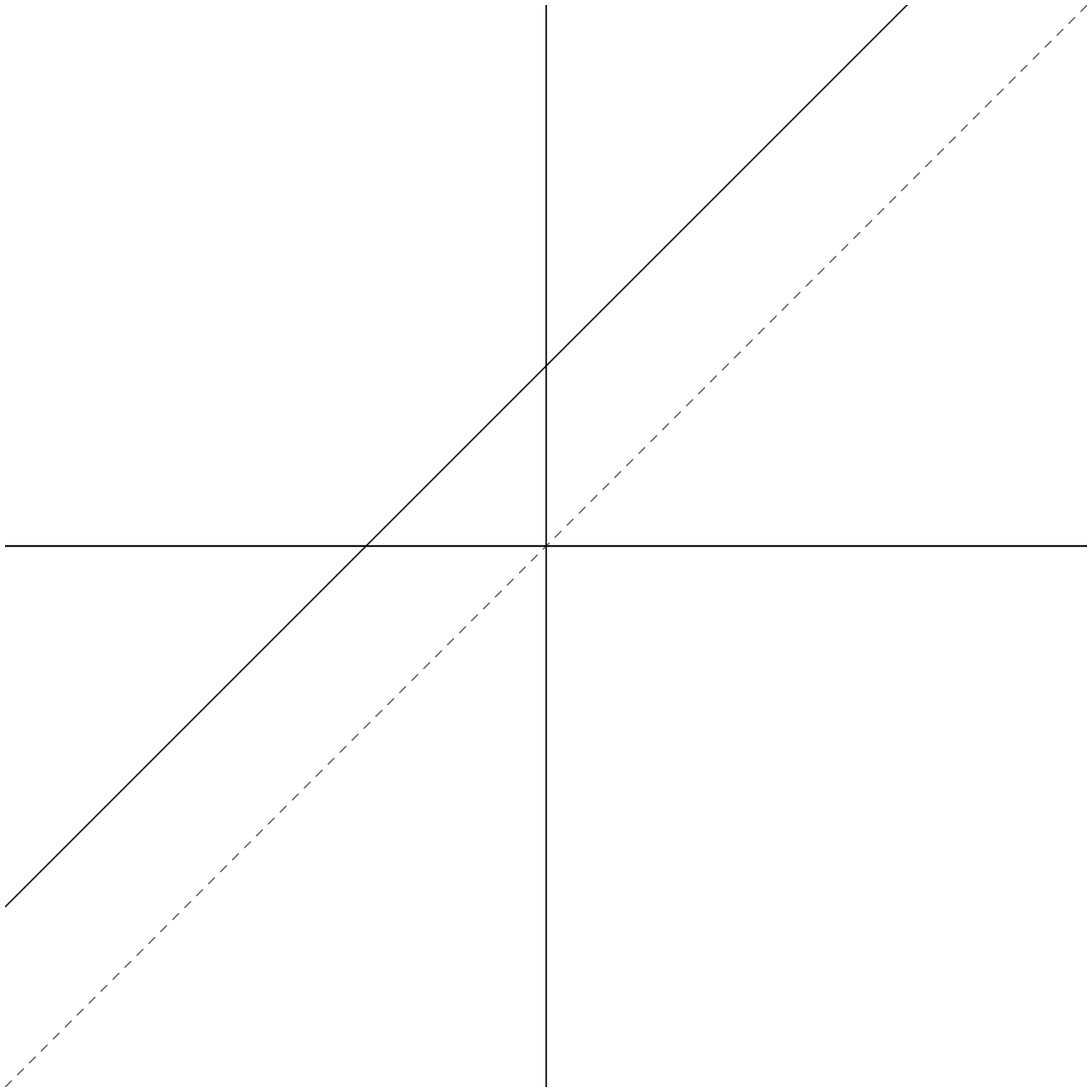}
&
\includegraphics[width=0.2\linewidth]{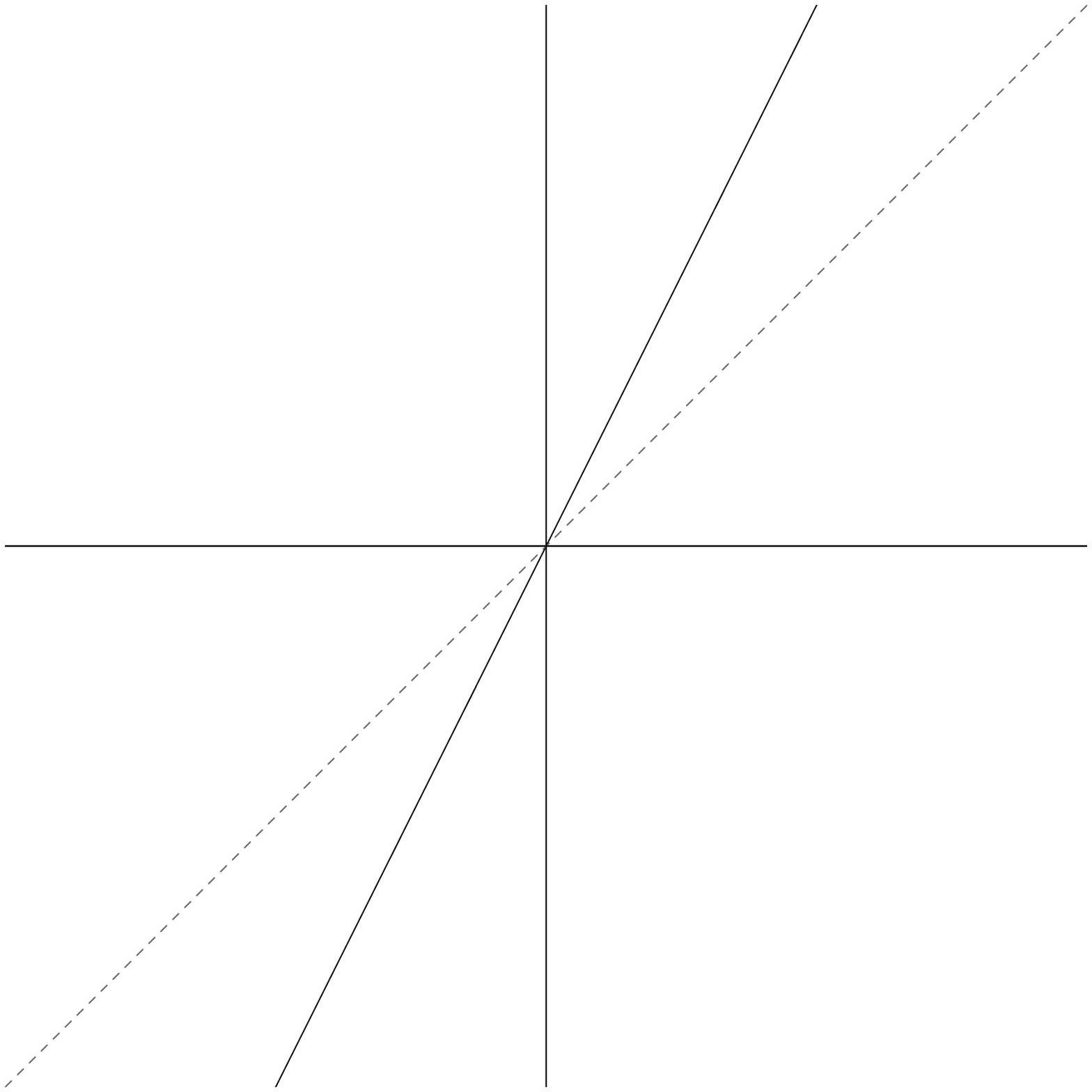}
&
\includegraphics[width=0.2\linewidth]{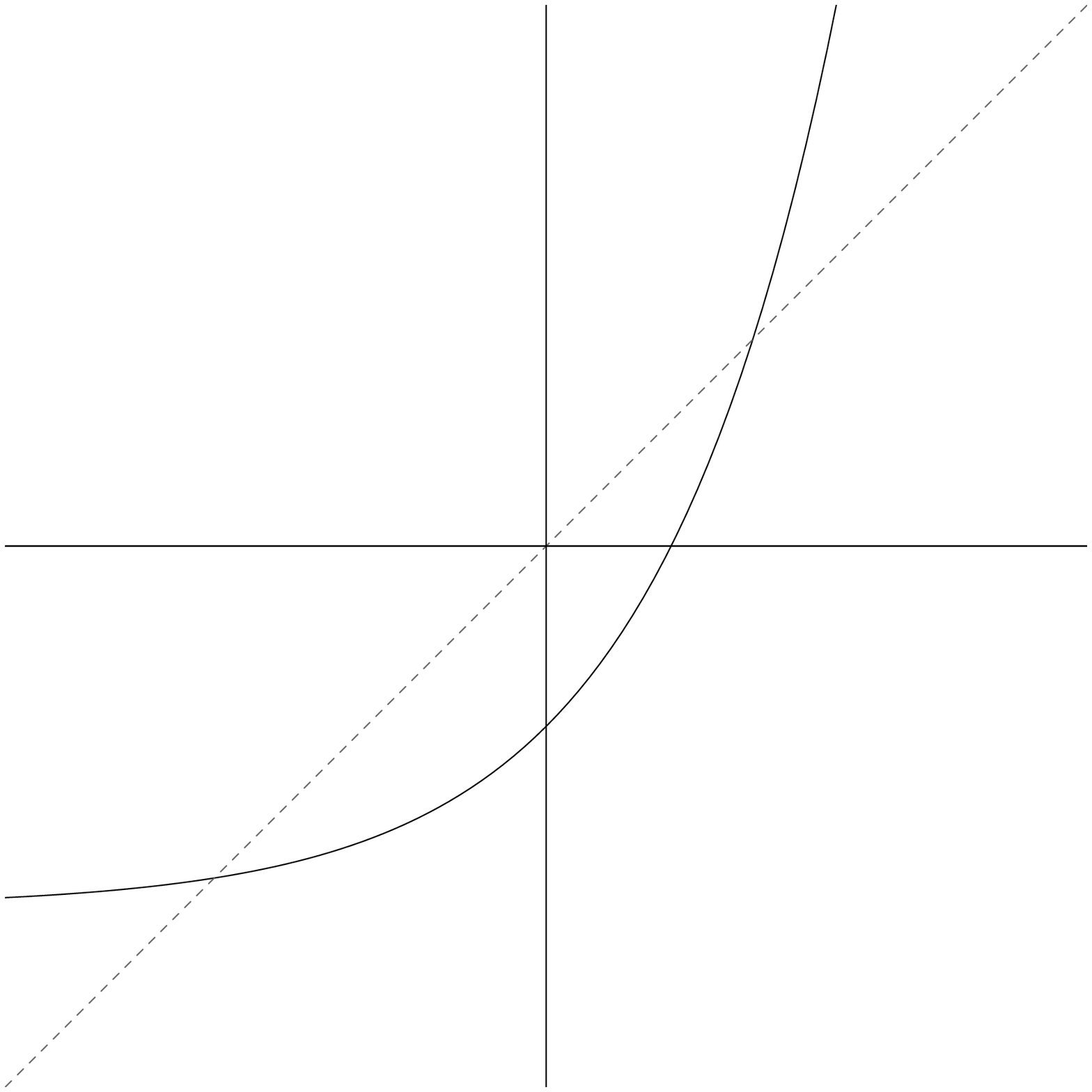}
&
\includegraphics[width=0.2\linewidth]{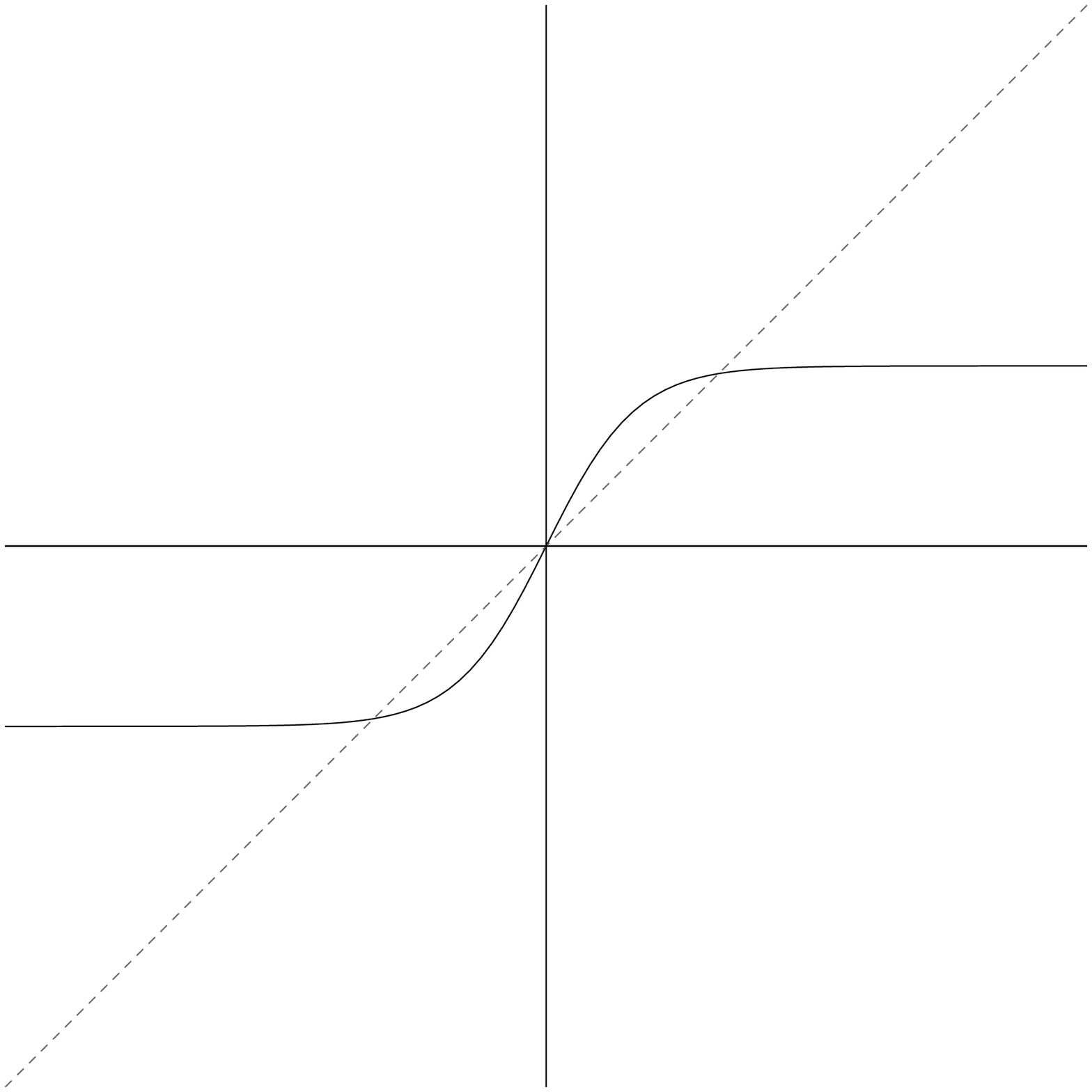}
\\
$f_0(x) = x + 1$ & $f_1(x) = 2 x$ & $f_2(x) = e^x - 2$ & $f_3(x) = \tanh(2 x)$\\
Zero fixed points & One fixed point & Two fixed points & Three fixed points\\
(a) & (b) & (c) & (d)
\end{tabular}
\end{centering}
\caption{Functions in $\F$ plotted with the diagonal in grey.} \label{fig:0to3}
\end{figure}

\noindent Theorem \ref{thm:characterization} shows that the above functions are in $\F$.
\end{example}

We now introduce another way of constructing new functions in $\F$ from other known functions in $\F$. We say a function $f \in \F$ is {\bf glueable} if $f'(0) = 1$ and $|f(x)| \leq |x|$ for all $x$ in the connected component of $\R \setminus \crit(f)$ containing zero. For glueable functions $f$ and $g$, define
\[(f \star g)(x) \coloneqq \begin{cases} 
    f(x), & x \leq 0\\
    g(x), & x \geq 0
  \end{cases}.
\]

\begin{lemma} \label{lem:glue}
If $f$ and $g$ are glueable, then $f \star g \in \F$.
\end{lemma}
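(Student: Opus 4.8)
The plan is to first check that $h \coloneqq f \star g$ is a genuine $C^1$ function and to describe its critical set, and then to verify the nowhere coexpanding condition component by component, with essentially all of the work concentrated on pairs of points straddling the gluing point $0$.

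First I would observe that glueability forces $f(0) = g(0) = 0$ (since $|f(0)| \le 0$) and $f'(0) = g'(0) = 1$, so the two pieces agree to first order at $0$; hence $h$ is $C^1$ and $0 \notin \crit(h)$. As $h = f$ on $(-\infty, 0]$ and $h = g$ on $[0, \infty)$, the critical set of $h$ consists of the critical points of $f$ in $(-\infty, 0)$ together with those of $g$ in $(0, \infty)$. Writing $I_f = (\alpha_f, \beta_f)$ and $I_g = (\alpha_g, \beta_g)$ for the components of $\R \setminus \crit(f)$ and $\R \setminus \crit(g)$ containing $0$, the component of $\R \setminus \crit(h)$ containing $0$ is $J = (\alpha_f, \beta_g)$, and every other component lies strictly on one side of $0$ and coincides with a component of $\R \setminus \crit(f)$ or of $\R \setminus \crit(g)$. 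On those other components $h$ equals $f$ or $g$ and is therefore nowhere coexpanding because $f, g \in \F$; so it remains only to treat $J$.

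On $J$ I would split into cases. If $x$ and $y$ lie on the same side of $0$, they both lie in $I_f$ or both in $I_g$, and then $\chi_h(x, y)$ equals $\chi_f(x, y)$ or $\chi_g(x, y)$, which is $\le 1$. The crux is the cross case $x < 0 < y$. Since $f'(0) = 1 > 0$ and $I_f$ contains no critical point, $f$ is increasing on $I_f$, and likewise $g$ on $I_g$; together with $f(0) = g(0) = 0$ this gives $f(x) < 0 < g(y)$. The idea is then to control the derivative factors by testing the nowhere coexpanding inequality against the point $0$: from $\chi_f(x, 0) \le 1$ we get $f'(x) \le (f(x)/x)^2$, and from $\chi_g(0, y) \le 1$ we get $g'(y) \le (g(y)/y)^2$. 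Substituting these into $\chi_h(x, y) = f'(x) g'(y) \left( \tfrac{x - y}{f(x) - g(y)} \right)^2$ and setting $X = -x$, $P = -f(x)$, $Y = y$, $Q = g(y)$ (all positive), the desired bound $\chi_h(x, y) \le 1$ reduces to the purely algebraic inequality $\frac{1}{P} + \frac{1}{Q} \ge \frac{1}{X} + \frac{1}{Y}$.

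The final step is then elementary: glueability states exactly that $P = |f(x)| \le |x| = X$ and $Q = |g(y)| \le |y| = Y$, so $\frac{1}{P} \ge \frac{1}{X}$ and $\frac{1}{Q} \ge \frac{1}{Y}$, and the inequality follows by addition. I expect the main obstacle to be the bookkeeping of the cross case — in particular checking the signs carefully so that the squared ratio combines cleanly with the two derivative bounds and collapses to a comparison of $\frac{1}{P} + \frac{1}{Q}$ with $\frac{1}{X} + \frac{1}{Y}$ — rather than any conceptual difficulty; once the problem is cast in that form the conclusion is immediate.
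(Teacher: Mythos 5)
Your proof is correct and follows essentially the same route as the paper's: reduce to a pair straddling $0$, bound $f'(x)$ and $g'(y)$ by $(f(x)/x)^2$ and $(g(y)/y)^2$ using $\chi_f(x,0) \leq 1$ and $\chi_g(0,y) \leq 1$, and finish with an algebraic inequality coming from $|f(x)| \leq |x|$ and $|g(y)| \leq |y|$ --- your harmonic-sum inequality $\frac{1}{P} + \frac{1}{Q} \geq \frac{1}{X} + \frac{1}{Y}$ is exactly the paper's $(x+y)f(x)g(y) \leq xy(f(x)+g(y))$ after clearing denominators. The only differences are presentational: you spell out the $C^1$ and critical-set bookkeeping that the paper leaves implicit, and you work with signed quantities directly instead of the paper's without-loss-of-generality reduction to odd $f$.
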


\begin{proof}
By the definition of $f \star g$, we may assume without loss of generality that $f$ is odd. Take $x, y > 0$, where $-x$ and $y$ both lie in the connected component of $\R \setminus \crit(f \star g)$ containing zero. It suffices to show $\chi_{f \star g}(-x, y) \leq 1$. Since $f$ and $g$ are glueable, $x f(x) g(y) \leq x f(x) y$ and $y f(x) g(y) \leq y x g(y)$. Adding these together yields $(x + y) f(x) g(y) \leq x y (f(x) + g(y))$, and thus
\[\frac{f(x) g(y)}{f(x) + g(y)} \frac{x + y}{x y} \leq 1, \qandq \left(\frac{f(x)} x \right)^2 \left(\frac{g(y)} y \right)^2 \left(\frac{x + y}{f(x) + g(y)} \right)^2 \leq 1.\]
Since $f, g \in \F$,
\[\chi_f(0, y) = f'(0) f'(y) \left(\frac y {f(y)} \right)^2 \leq 1 \Rightarrow f'(y) \leq \left(\frac{f(y)} y \right)^2\]
and the same for $g$. Therefore
\[\chi_{f \star g}(-x, y) \leq f'(x) g'(y) \left(\frac{x + y}{f(x) + g(y)} \right)^2 \leq 1.\qedhere\]
\end{proof}

\begin{example}
Here, we show how to construct a function $h \in \F$ such that $\fix(h) = [a, b]$ for any prescribed, possibly unbounded, interval $[a, b]$.

Since $\F$ is closed under composition with affine functions, without loss of generality, we only need to find examples of $h$ where $\fix(h)$ is $\R$, $(-\infty, 0]$, and $[0, 1]$. We saw in Example \ref{eg:finite} that $\tanh \in \F$. Observe that $\tanh$ is also glueable. Furthermore, the identity function, $\id$, is the only glueable affine function. We treat each case of $\fix(h)$ separately.

\begin{enumerate}
\item Observe $h = \id \in \F$ is the only choice of $h$ satisfying $\fix(h) = \R$.
\item By Lemma \ref{lem:glue}, $\id \star \tanh \in \F$, so $h = \id \star \tanh$ is a suitable choice such that $\fix(h) = (-\infty, 0]$.
\item Define $g_a(x) \coloneqq (\id \star \tanh)(x - a) + a$ and choose $h = \tanh \star g_1$. Then $h \in \F$ and $\fix(h) = [0, 1]$.
\end{enumerate}
\end{example}

\section{Conclusions}

Part of our original motivation in studying this topic was to look at activation functions used in machine learning, the properties of these functions under composition, and of their fixed points. If an activation function lies in $\F$ then its composition with itself and with affine functions (in dimension one) will still lie in $\F$. This suggests that such a function might not be well suited to approximating general functions, but more research is needed to see what relevance this one-dimensional work has to high-dimensional neural networks.

Early in the development of machine learning, the logistic sigmoid function $1/(1+e^{-x})$ was a popular choice of activation function, and it lies in $\F$. More recently, the ReLU function has become popular. Since the ReLU function is not $C^1$, our theory here does not apply. However, there are more regular variants of the ReLU function. One of these is the ELU function \citep{2016clevert}, defined as gluing together the identity function $x \mapsto x$ for $x \geq 0$ and $x \mapsto e^x - 1$ for $x \leq 0$. Lemma \ref{lem:glue} shows that the ELU function lies in $\F$.

We conclude by demonstrating with an example that $\F$ is not closed under addition. Consider the function $f(x) \coloneqq \tanh(4 x) + \tanh(x/4)$, shown in Figure \ref{fig:tanh_counterexample}. Evaluating $S_{f}$ at $x = 1$ yields $S_{f}(1) > 1$, so by Theorem \ref{thm:characterization}, $f \notin \F$. Figure \ref{fig:composition_counterexample} shows how a composition of two copies of $f$ together with three affine functions yields a function with five fixed points. The affine functions are included to show how extra fixed points can be obtained.

A bounded $C^2$ function $f$ is {\bf sigmoidal} if $f'(x) > 0$ for all $x$, and $f$ has exactly one inflection point \citep{1995han}. There are many examples of sigmoidal functions in $\F$, such as $\tanh$, the logistic sigmoid, $\mathrm{erf}$, and $\arctan$. The function $f$ in the above example is also sigmoidal, but is not in $\F$.

\begin{figure}[H]
\begin{center}
\includegraphics[width=0.6\linewidth]{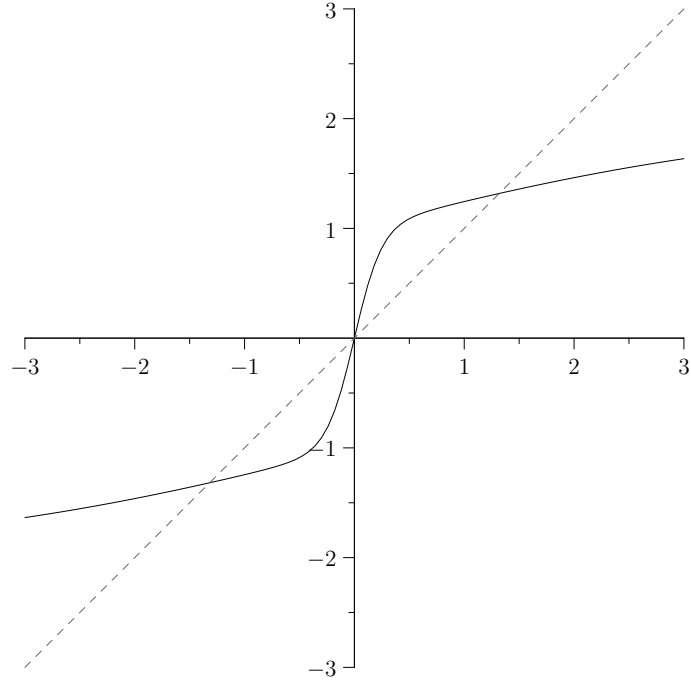}
\end{center}
\caption{The function $f(x) = \tanh(4 x) + \tanh(x/4)$.} \label{fig:tanh_counterexample}
\end{figure}

\begin{figure}[H]
\begin{center}
\includegraphics[width=0.57\linewidth]{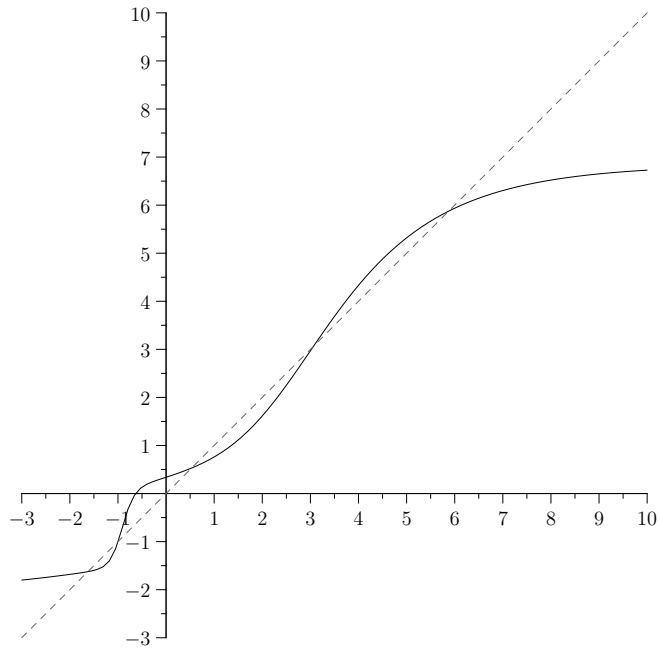}
\end{center}
\caption{A plot of the composition $x \mapsto 4 f(f(x + s) - 2 s) + s + 4$ with parameter $s = 0.94$ plotted with the diagonal to show where the five fixed points are.} \label{fig:composition_counterexample}
\end{figure}

\section{Acknowledgements}

This research is supported in part by an Australian Government Research Training Program (RTP) Scholarship. The authors would like to thank the anonymous reviewers for the helpful suggestions.

\bibliographystyle{abbrvnat}
\bibliography{references}

\begin{thebibliography}{10}
\providecommand{\natexlab}[1]{#1}
\providecommand{\url}[1]{\texttt{#1}}
\expandafter\ifx\csname urlstyle\endcsname\relax
  \providecommand{\doi}[1]{doi: #1}\else
  \providecommand{\doi}{doi: \begingroup \urlstyle{rm}\Url}\fi

\bibitem[Clevert et~al.(2016)Clevert, Unterthiner, and Hochreiter]{2016clevert}
D.-A. Clevert, T.~Unterthiner, and S.~Hochreiter.
\newblock Fast and accurate deep network learning by exponential linear units
  (elus), 2016.
\newblock ISSN 2331-8422.

\bibitem[de~Melo and van Strien(1993)]{1993melo}
W.~de~Melo and S.~van Strien.
\newblock \emph{One-Dimensional Dynamics}, volume~25 of \emph{Ergebnisse der
  Mathematik und ihrer Grenzgebiete. 3. Folge / A Series of Modern Surveys in
  Mathematics}.
\newblock Springer Berlin / Heidelberg, Berlin, Heidelberg, 1993.
\newblock ISBN 3540564128.

\bibitem[Han and Moraga(1995)]{1995han}
J.~Han and C.~Moraga.
\newblock The influence of the sigmoid function parameters on the speed of
  backpropagation learning.
\newblock In J.~Mira and F.~Sandoval, editors, \emph{From Natural to Artificial
  Neural Computation}, pages 195--201, Berlin, Heidelberg, 1995. Springer
  Berlin Heidelberg.
\newblock ISBN 978-3-540-49288-7.

\bibitem[Hartman(2002)]{2002hartman}
P.~Hartman.
\newblock \emph{Ordinary differential equations.}, volume~38 of \emph{Classics
  Appl. Math.}
\newblock Philadelphia, PA: SIAM, 2nd ed., unabridged, corrected republication
  of the 1982 original edition, 2002.
\newblock ISBN 0-89871-510-5; 978-0-89871-922-2.
\newblock \doi{10.1137/1.9780898719222}.

\bibitem[Hawley and Schiffer(1966)]{MR190326}
N.~S. Hawley and M.~Schiffer.
\newblock Half-order differentials on {R}iemann surfaces.
\newblock \emph{Acta Math.}, 115:\penalty0 199--236, 1966.
\newblock ISSN 0001-5962.
\newblock \doi{10.1007/BF02392208}.

\bibitem[Kozlovski(2000)]{2000kozlovski}
O.~S. Kozlovski.
\newblock Getting rid of the negative {S}chwarzian derivative condition.
\newblock \emph{Ann. of Math. (2)}, 152\penalty0 (3):\penalty0 743--762, 2000.
\newblock ISSN 0003-486X,1939-8980.
\newblock \doi{10.2307/2661353}.

\bibitem[Mora(2018)]{2018mora}
L.~Mora.
\newblock Singer theorem for endomorphisms.
\newblock \emph{Nonlinearity}, 31\penalty0 (5):\penalty0 1833--1848, 2018.
\newblock ISSN 0951-7715,1361-6544.
\newblock \doi{10.1088/1361-6544/aaa5e1}.

\bibitem[Robinson(2012)]{MR3012659}
R.~C. Robinson.
\newblock \emph{An introduction to dynamical systems---continuous and
  discrete}, volume~19 of \emph{Pure and Applied Undergraduate Texts}.
\newblock American Mathematical Society, Providence, RI, second edition, 2012.
\newblock ISBN 978-0-8218-9135-3.

\bibitem[Singer(1978)]{MR494306}
D.~Singer.
\newblock Stable orbits and bifurcation of maps of the interval.
\newblock \emph{SIAM J. Appl. Math.}, 35\penalty0 (2):\penalty0 260--267, 1978.
\newblock ISSN 0036-1399.
\newblock \doi{10.1137/0135020}.

\bibitem[Webb(2009)]{2009webb}
B.~Webb.
\newblock Dynamics of functions with an eventual negative {S}chwarzian
  derivative.
\newblock \emph{Discrete Contin. Dyn. Syst.}, 24\penalty0 (4):\penalty0
  1393--1408, 2009.
\newblock ISSN 1078-0947.
\newblock \doi{10.3934/dcds.2009.24.1393}.

\end{thebibliography}

\end{document}